\providecommand{\tabularnewline}{\\}
\theoremstyle{plain}
\newtheorem{thm}{\protect\theoremname}
\newcommand\myshade{55}
\colorlet{mylinkcolor}{blue}
\colorlet{mycitecolor}{blue}
\colorlet{myurlcolor}{blue!50}
\setlist{topsep=0.5mm, parsep=0.7mm, partopsep=0.5mm}
\setlist[enumerate]{leftmargin=2em}
\let\oldenumerate=\enumerate
\def\enumerate{
\oldenumerate
\setlength{\itemsep}{0pt}
}
\let\olditemize=\itemize
\def\itemize{
\olditemize
\setlength{\itemsep}{0pt}
}
\providecommand{\theoremname}{Theorem}
\begin{document}
\global\long\def\bg{\boldsymbol{g}}

\global\long\def\bxi{\boldsymbol{\xi}}

\global\long\def\btau{\boldsymbol{\tau}}

\global\long\def\bmu{\boldsymbol{\mu}}
\global\long\def\fssdhp{\widehat{F_{p}^{2}}}
\global\long\def\fssdhq{\widehat{F_{q}^{2}}}

\global\long\def\fssdp{F_{p}^{2}}
\global\long\def\fssdq{F_{q}^{2}}

\global\long\def\umehpr{\widehat{U_{P}^{2}}}
\global\long\def\umehqr{\widehat{U_{Q}^{2}}}

\global\long\def\umepr{U_{P}^{2}}
\global\long\def\umeqr{U_{Q}^{2}}

\global\long\def\nmehpr{\widehat{\mathrm{NME}_{P}^{2}}}
\global\long\def\nmehqr{\widehat{\mathrm{NME}_{Q}^{2}}}

\global\long\def\mehp{\widehat{\mathrm{ME}_{p}^{2}}}
\global\long\def\mehq{\widehat{\mathrm{ME}_{q}^{2}}}

\global\long\def\psipv{\psi_{V}^{P}}
\global\long\def\psiqw{\psi_{W}^{Q}}

\global\long\def\psirv{\psi_{V}^{R}}
\global\long\def\psirw{\psi_{W}^{R}}

\global\long\def\psihpv{\hat{\psi}_{V}^{P}}
\global\long\def\psihqw{\hat{\psi}_{W}^{Q}}

\global\long\def\psihrv{\hat{\psi}_{V}^{R}}
\global\long\def\psihrw{\hat{\psi}_{W}^{R}}

\global\long\def\covpv{C_{V}^{P}}
\global\long\def\covqw{C_{W}^{Q}}

\global\long\def\covrv{C_{V}^{R}}
\global\long\def\covrw{C_{W}^{R}}

\global\long\def\covhpv{\hat{C}_{V}^{P}}
\global\long\def\covhqw{\hat{C}_{W}^{Q}}

\global\long\def\covhrv{\hat{C}_{V}^{R}}
\global\long\def\covhrw{\hat{C}_{W}^{R}}

\global\long\def\relume{\mathrm{Rel\textnormal{-}UME}}
\global\long\def\relfssd{\mathrm{Rel\textnormal{-}FSSD}}
\global\long\def\relmmd{\mathrm{Rel\textnormal{-}MMD}}

\newcommand{\ourtitle}{Informative Features for Model Comparison}

\theoremstyle{remark} 
\newtheorem{remark}{Remark} 
\date{}

\title{\ourtitle}
%
\author{
  Wittawat Jitkrittum\\
  Max Planck Institute for Intelligent Systems\\
  \small{\url{wittawat@tuebingen.mpg.de}} \\
%
 \And
Heishiro Kanagawa\\
Gatsby Unit, UCL\\
\small{\url{heishirok@gatsby.ucl.ac.uk}} \\
 \And
Patsorn Sangkloy\\
 Georgia Institute of Technology\\
\small{\url{patsorn_sangkloy@gatech.edu}} \\
\And
James Hays\\
Georgia Institute of Technology\\
\small{\url{hays@gatech.edu}} \\
\And
Bernhard Sch\"{o}lkopf\\
Max Planck Institute for Intelligent Systems\\
\small{\url{bernhard.schoelkopf@tuebingen.mpg.de}} \\
\And
Arthur Gretton\thanks{Arthur Gretton's ORCID ID: 0000-0003-3169-7624.}  \\
Gatsby Unit, UCL \\
\small{\url{arthur.gretton@gmail.com}}
}

\maketitle
\begin{abstract}
Given two candidate models, and a set of target observations, we address
the problem of measuring the relative goodness of fit of the two models.
We propose two new statistical tests which are nonparametric, computationally
efficient (runtime complexity is linear in the sample size), and interpretable.
As a unique advantage, our tests can produce a set of examples (informative
features) indicating the regions in the data domain where one model
fits significantly better than the other. In a real-world problem
of comparing GAN models, the test power of our new test matches that
of the state-of-the-art test of relative goodness of fit, while being
one order of magnitude faster.\vspace{-3mm}
\end{abstract}

\section{Introduction}

\vspace{-3mm}One of the most fruitful areas in recent machine learning
research has been the development of effective generative models for
very complex and high dimensional data. Chief among these have been
the generative adversarial networks \citep{GooPouMirXuWar2014,wgan,NowBotRyo16},
where samples may be generated without an explicit generative model
or likelihood function. A related thread has emerged in the statistics
community with the advent of Approximate Bayesian Computation, where
simulation-based models without closed-form likelihoods are widely
applied in bioinformatics applications \citep[see][for a review]{LinGurtDutKasCor17}.
In these cases, we might have several competing models, and wish
to evaluate which is the better fit for the data.

The problem of model criticism is traditionally defined as follows:
how well does a model $Q$ fit a given sample $Z_{n}:=\{\boldsymbol{z}_{i}\}_{i=1}^{n}\stackrel{i.i.d.}{\sim}R$?
This task can be addressed in two ways: by comparing samples $Y_{n}:=\{\boldsymbol{y}_{i}\}_{i=1}^{n}$
from the model $Q$ and data samples, or by directly evaluating the
goodness of fit of the model itself. In both of these cases, the tests
have a null hypothesis (that the model agrees with the data), which
they will reject given sufficient evidence. Two-sample tests fall
into the first category: there are numerous nonparametric tests which
may be used \citep{AlbaFernandez2008,GreBorRasSchSmo2012,FriRaf79,SzeRiz04,Rosenbaum05,HarBacMou08,HalTaj02,JitSzaChwGre2016},
and recent work in applying two-sample tests to the problem of model
criticism \citep{lloyd2015statistical}. A second approach requires
the model density $q$ explicitly. In the case of simple models for
which normalisation is not an issue (e.g., checking for Gaussianity),
several tests exist \citep{BaringhausHenze88,SzeRiz2005}; when a
model density is known only up to a normalisation constant, tests
of goodness of fit have been developed using a Stein-based divergence
\citep{ChwStrGre2016,LiuLeeJor2016,JitXuSzaFukGre2017}.

An issue with the above notion of model criticism, particularly in
the case of modern generative models, is that \emph{any} hypothetical
model $Q$ that we design is likely a poor fit to the data. Indeed,
as noted in \citet[Section 5.5]{YamWuTsaTakSal2018}, comparing samples
from various Generative Adversarial Network (GAN) models \citep{GooPouMirXuWar2014}
to the reference sample $Z_{n}$ by a variant of the Maximum Mean
Discrepancy (MMD) test \citep{GreBorRasSchSmo2012} leads to the trivial
conclusion that all models are wrong \citep{Box76}, i.e., $H_{0}\colon Q=R$
is rejected by the test in all cases. A more relevant question in
practice is thus: ``Given two models $P$ and $Q$, which is closer
to $R$, and in what ways?'' This is the problem we tackle in this
work.

To our knowledge, the only nonparametric statistical test of \emph{relative}
goodness of fit is the $\relmmd$ test of \citet{BouBelBlaAntGre2015},
based on the maximum mean discrepancy \citep[MMD,][]{GreBorRasSchSmo2012}.
While shown to be practical (e.g., for comparing network architectures
of generative networks), two issues remain to be addressed. Firstly,
its runtime complexity is quadratic in the sample size $n$, meaning
that it can be applied only to problems of moderate size. Secondly
and more importantly, it does not give an indication of where one
model is better than the other. This is essential for model comparison:
in practical settings, it is highly unlikely that one model will be
uniformly better than another in all respects: for instance, in hand-written
digit generation, one model might produce better ``3''s, and the
other better ``6''s. The ability to produce a few examples which
indicate regions (in the data domain) in which one model fits better
than the other will be a valuable tool for model comparison. This
type of interpretability is useful especially in learning generative
models with GANs, where the ``mode collapse'' problem is widespread
\citep{SalGooZarCheRad2016,SriValRusGutSut2017}. The idea of generating
such distinguishing examples (so called \emph{test locations}) was
explored in \citet{JitSzaChwGre2016,JitXuSzaFukGre2017} in the context
of model criticism and two-sample testing.

In this work, we propose two new linear-time tests for relative goodness-of-fit.
In the first test, the two models $P,Q$ are represented by their
two respective samples $X_{n}$ and $Y_{n}$, and the test generalises
that of \citet{JitSzaChwGre2016}. In the second, the test has access
to the probability density functions $p,q$ of the two respective
candidate models $P,Q$ (which need only be known up to normalisation),
and is a three-way analogue of the test of \citet{JitXuSzaFukGre2017}.
In both cases, the tests return locations indicating where one model
outperforms the other. We emphasise that the practitioner must choose
the model ordering, since as noted earlier, this will determine the
locations that the test prioritises. We further note that the two
tests complement each other, as both address different aspects of
the model comparison problem. The first test simply finds the location
where the better model produces mass closest to the test sample: a
worse model can produce too much mass, or too little. The second test
does not address the overall probability mass, but rather the shape
of the model density: specifically, it penalises the model whose derivative
log density differs most from the target (the interpretation is illustrated
in our experiments). In the experiment on comparing two GAN models,
we find that the performance of our new test matches that of $\relmmd$
while being one order of magnitude faster. Further, unlike the popular
Fr\'{e}chet Inception Distance (FID) \citep{HeuRamUntNesHoc2017}
which can give a wrong conclusion when two GANs have equal goodness
of fit, our proposed method has a well-calibrated threshold, allowing
the user to flexibly control the false positive rate.\vspace{-3mm}

\section{Measures of Goodness of Fit}

\vspace{-2mm}In the proposed tests, we test the relative goodness
of fit by comparing the relative magnitudes of two distances, following
\citet{BouBelBlaAntGre2015}. More specifically, let $D(P,R)$ be
a discrepancy measure between $P$ and $R$. Then, the problem can
be formulated as a hypothesis test proposing $H_{0}\colon D(P,R)\le D(Q,R)$
against $H_{1}\colon D(P,R)>D(Q,R)$. This is the approach taken by
\citeauthor{BouBelBlaAntGre2015} who use the MMD as $D$, resulting
in the relative MMD test ($\relmmd$). The proposed $\relume$ and
$\relfssd$ tests are based on two recently proposed discrepancy measures
for $D$: the Unnormalized Mean Embeddings (UME) statistic \citep{ChwRamSejGre2015,JitSzaChwGre2016},
and the Finite-Set Stein Discrepancy (FSSD) \citep{JitXuSzaFukGre2017},
for the sample-based and density-based settings, respectively. We
first review UME and FSSD. We will extend these two measures to construct
two new relative goodness-of-fit tests in Section \ref{sec:new_mctests}.
We assume throughout that the probability measures $P,Q,R$ have a
common support $\mathcal{X}\subseteq\mathbb{R}^{d}$. 

\textbf{The Unnormalized Mean Embeddings (UME) Statistic} \label{sec:ume}UME
is a (random) distance between two probability distributions \citep{ChwRamSejGre2015}
originally proposed for two-sample testing for $H_{0}\colon Q=R$
and $H_{1}\colon Q\neq R$. Let $k_{Y}\colon\mathcal{X}\times\mathcal{X}\to\mathbb{R}$
be a positive definite kernel. Let $\mu_{Q}$ be the mean embedding
of $Q$, and is defined such that $\mu_{Q}(\boldsymbol{w}):=\mathbb{E}_{\boldsymbol{y}\sim Q}k(\boldsymbol{y},\boldsymbol{w})$
 (assumed to exist) \citep{SmoGreSonSch2007}. \citet{GreBorRasSchSmo2012}
shows that when $k_{Y}$ is characteristic \citep{SriFukLan2011},
the Maximum Mean Discrepancy (MMD) \emph{witness function} $\mathrm{wit}_{Q,R}(\boldsymbol{w}):=\mu_{Q}(\boldsymbol{w})-\mu_{R}(\boldsymbol{w})$
is a zero function if and only if $Q=R$. Based on this fact, the
UME statistic evaluates the squared witness function at $J_{q}$ test
locations $W:=\{\boldsymbol{w}_{j}\}_{j=1}^{J_{q}}\subset\mathcal{X}$
to determine whether it is zero. Formally, the population squared
UME statistic is defined as $U^{2}(Q,R):=\frac{1}{J}\sum_{j=1}^{J}(\mu_{Q}(\boldsymbol{w}_{j})-\mu_{R}(\boldsymbol{w}_{j}))^{2}$.
For our purpose, it will be useful to rewrite the UME statistic as
follows. Define the feature function $\psi_{W}(\boldsymbol{y}):=\frac{1}{\sqrt{J_{q}}}\left(k_{Y}(\boldsymbol{y},\boldsymbol{w}_{1}),\ldots,k_{Y}(\boldsymbol{y},\boldsymbol{w}_{J_{q}})\right)^{\top}\in\mathbb{R}^{J_{q}}$.
Let $\psiqw:=\mathbb{E}_{\boldsymbol{y}\sim Q}[\psi_{W}(\boldsymbol{y})]$,
and its empirical estimate $\psihqw:=\frac{1}{n}\sum_{i=1}^{n}\psi_{W}(\boldsymbol{y}_{i})$.
The squared population UME statistic is equivalent to $U^{2}(Q,R):=\|\psiqw-\psirw\|_{2}^{2}.$
For $W\sim\eta$ where $\eta$ is a distribution with a density, Theorem
2 in \citet{ChwRamSejGre2015} states that if $k_{Y}$ is real analytic,
integrable, and characteristic, then $\eta$-almost surely $\|\psiqw-\psirw\|_{2}^{2}=0$
if and only if $Q=R$. In words, under the stated conditions, $U(Q,R):=U_{Q}$
defines a distance between $Q$ and $R$ (almost surely).\footnote{In this work, since the distance is always measured relative to the
data generating distribution $R$, we write $U_{Q}$ instead of $U(Q,R)$
to avoid cluttering the notation.} A consistent unbiased estimator is $\umehqr=\frac{1}{n(n-1)}\left[\|\sum_{i=1}^{n}[\psi_{W}(\boldsymbol{y}_{i})-\psi_{W}(\boldsymbol{z}_{i})]\|^{2}-\sum_{i=1}^{n}\|\psi_{W}(\boldsymbol{y}_{i})-\psi_{W}(\boldsymbol{z}_{i})\|^{2}\right]$,
which clearly can be computed in $\mathcal{O}(n)$ time. \citet{JitSzaChwGre2016}
proposed optimizing the test locations $W$ and $k_{Y}$ so as to
maximize the test power (i.e., the probability of rejecting $H_{0}$
when it is false) of the two-sample test with the normalized version
of the UME statistic. It was shown that the optimized locations give
an interpretable indication of where $Q$ and $R$ differ in the input
domain $\mathcal{X}$.

\textbf{The Finite-Set Stein Discrepancy (FSSD)} \label{sec:fssd}
FSSD is a discrepancy between two density functions $q$ and $r$.
Let $\mathcal{X}\subseteq\mathbb{R}^{d}$ be a connected open set.
Assume that $Q,R$ have probability density functions denoted by $q,r$
respectively. Given a positive definite kernel $k_{Y}$, the \emph{Stein
witness function }\citep{ChwStrGre2016,LiuLeeJor2016} $\bg^{q,r}\colon\mathcal{X}\to\mathbb{R}^{d}$
between $q$ and $r$ is defined as $\bg^{q,r}(\boldsymbol{w}):=\mathbb{E}_{\boldsymbol{z}\sim r}\left[\bxi^{q}(\boldsymbol{z},\boldsymbol{w})\right]=(g_{1}^{q,r}(\boldsymbol{w}),\ldots,g_{d}^{q,r}(\boldsymbol{w}))^{\top},$
where $\bxi^{q}(\boldsymbol{z},\boldsymbol{w}):=k_{Y}(\boldsymbol{z},\boldsymbol{w})\nabla_{\boldsymbol{z}}\log q(\boldsymbol{z})+\nabla_{\boldsymbol{z}}k_{Y}(\boldsymbol{z},\boldsymbol{w})$.
 Under appropriate conditions (see \citet[Theorem 2.2]{ChwStrGre2016}
and \citet[Proposition 3.3]{LiuLeeJor2016}), it can be shown that
$\bg^{q,r}=\boldsymbol{0}$ (i.e., the zero function) if and only
if $q=r$. An implication of this result is that the deviation of
$\bg^{q,r}$ from the zero function can be used as a measure of mismatch
between $q$ and $r$. Different ways to characterize such deviation
have led to different measures of goodness of fit. 

The FSSD characterizes such deviation from $\boldsymbol{0}$ by evaluating
$\boldsymbol{g}^{q,r}$ at $J_{q}$ test locations. Formally, given
a set of test locations $W=\{\boldsymbol{w}_{j}\}_{j=1}^{J_{q}}$,
the squared FSSD is defined as $\mathrm{FSSD}_{q}^{2}(r):=\frac{1}{dJ_{q}}\sum_{j=1}^{J_{q}}\|\bg^{q,r}(\boldsymbol{w}_{j})\|_{2}^{2}:=\fssdq$
\citep{JitXuSzaFukGre2017}. Under appropriate conditions, it is known
that almost surely $\fssdq=0$ if and only if $q=r$. Using the notations
as in \citet{JitXuSzaFukGre2017}, one can write $\fssdq=\mathbb{E}_{\boldsymbol{z}\sim r}\mathbb{E}_{\boldsymbol{z}'\sim r}\Delta_{q}(\boldsymbol{z},\boldsymbol{z}')$
where $\Delta_{q}(\boldsymbol{z},\boldsymbol{z}'):=\btau_{q}^{\top}(\boldsymbol{z})\btau_{q}(\boldsymbol{z}')$,
$\btau_{q}(\boldsymbol{z}):=\mathrm{vec}(\boldsymbol{\Xi}^{q}(\boldsymbol{z}))\in\mathbb{R}^{dJ_{q}}$,
$\mathrm{vec}(\boldsymbol{M})$ concatenates columns of $\boldsymbol{M}$
into a column vector, and $\boldsymbol{\Xi}^{q}(\boldsymbol{z})\in\mathbb{R}^{d\times J_{q}}$
is defined such that $[\boldsymbol{\Xi}^{q}(\boldsymbol{z})]_{i,j}:=\xi_{i}^{q}(\boldsymbol{z},\boldsymbol{w}_{j})/\sqrt{dJ_{q}}$
for $i=1,\ldots,d$ and $j=1,\ldots,J_{q}$. Equivalently, $\fssdq=\|\bmu_{q}\|_{2}^{2}$
where $\bmu_{q}:=\mathbb{E}_{\boldsymbol{z}\sim r}[\btau_{q}(\boldsymbol{z})]$.
Similar to the UME statistic described previously, given a sample
$Z_{n}=\{\boldsymbol{z}_{i}\}_{i=1}^{n}\sim r$, an unbiased estimator
of $\fssdq$, denoted by $\fssdhq$ can be straightforwardly written
as a second-order U-statistic, which can be computed in $\mathcal{O}(J_{q}n)$
time. It was shown in \citet{JitXuSzaFukGre2017} that the test locations
$W$ can be chosen by maximizing the test power of the goodness-of-fit
test proposing $H_{0}:q=r$ against $H_{1}:q\neq r$, using $\fssdhq$
as the statistic. We note that, unlike UME, $\fssdhq$ requires access
to the density $q$. Another way to characterize the deviation of
$\boldsymbol{g}^{q,r}$ from the zero function is to use the norm
in the reproducing kernel Hilbert space (RKHS) that contains $\boldsymbol{g}^{q,r}$.
This measure is known as the Kernel Stein Discrepancy having a runtime
complexity of $\mathcal{O}(n^{2})$ \citep{ChwStrGre2016,LiuLeeJor2016,GorMac2015}.\vspace{-2mm}

\section{Proposal: $\protect\relume$ and $\protect\relfssd$ Tests}

\vspace{-2mm}\label{sec:new_mctests}\textbf{Relative UME ($\relume$)
}Our first proposed relative goodness-of-fit test based on UME tests
$H_{0}\colon U^{2}(P,R)\le U^{2}(Q,R)$ versus $H_{1}\colon U^{2}(P,R)>U^{2}(Q,R)$.
The test uses $\sqrt{n}\hat{S}_{n}^{U}=\sqrt{n}(\umehpr-\umehqr)$
as the statistic, and rejects $H_{0}$ when it is larger than the
threshold $T_{\alpha}$. The threshold is given by the $(1-\alpha)$-quantile
of the asymptotic distribution of $\sqrt{n}\hat{S}_{n}^{U}$ when
$H_{0}$ holds i.e., the null distribution, and the pre-chosen $\alpha$
is the significance level. It is well-known that this choice for the
threshold asymptotically controls the false rejection rate to be bounded
above by $\alpha$ yielding a level-$\alpha$ test \citep[Definition 8.3.6]{CasBer2002}.
In the full generality of $\relume$, two sets of test locations can
be used: $V=\{\boldsymbol{v}_{j}\}_{j=1}^{J_{p}}$ for computing $\umehpr$,
and $W=\{\boldsymbol{w}_{j}\}_{j=1}^{J_{q}}$ for $\umehqr$. The
feature function for $\umehpr$ is denoted by $\psi_{V}(\boldsymbol{x}):=\frac{1}{\sqrt{J_{p}}}\left(k_{X}(\boldsymbol{x},\boldsymbol{v}_{1}),\ldots,k_{X}(\boldsymbol{x},\boldsymbol{v}_{J_{p}})\right)^{\top}\in\mathbb{R}^{J_{p}},$
for some kernel $k_{X}$ which can be different from $k_{Y}$ used
in $\psi_{W}$. The asymptotic distribution of the statistic is stated
in Theorem \ref{thm:normal_sume}.

\begin{restatable}[Asymptotic  distribution of $\hat{S}^U_n$]{thm}{normalsume}

\label{thm:normal_sume}Define $\covqw:=\mathrm{cov}_{\boldsymbol{y}\sim Q}[\psi_{W}(\boldsymbol{y}),\psi_{W}(\boldsymbol{y})]$,
$\covpv:=\mathrm{cov}_{\boldsymbol{x}\sim P}[\psi_{V}(\boldsymbol{x}),\psi_{V}(\boldsymbol{x})]$,
and $C_{VW}^{R}:=\mathrm{cov}_{\boldsymbol{z}\sim R}[\psi_{V}(\boldsymbol{z}),\psi_{W}(\boldsymbol{z})]\in\mathbb{R}^{J_{p}\times J_{q}}$.
Let $S^{U}:=\umepr-\umeqr$, and $\boldsymbol{M}:=\left(\begin{array}{cc}
\psipv-\psirv & \boldsymbol{0}\\
\boldsymbol{0} & \psiqw-\psirw
\end{array}\right)\in\mathbb{R}^{(J_{p}+J_{q})\times2}$. Assume that \uline{1)} $P,Q$ and $R$ are all distinct, \uline{2)}
$(k_{X},V)$ are chosen such that $U_{P}^{2}>0$, and $(k_{Y},W)$
are chosen such that $U_{Q}^{2}>0$, \uline{3)} $\left(\begin{array}{cc}
\zeta_{P}^{2} & \zeta_{PQ}\\
\zeta_{PQ} & \zeta_{Q}^{2}
\end{array}\right):=\boldsymbol{M}^{\top}\left(\begin{array}{cc}
\covpv+\covrv & C_{VW}^{R}\\
(C_{VW}^{R})^{\top} & \covqw+\covrw
\end{array}\right)\boldsymbol{M}$ is positive definite.    Then, $\sqrt{n}\left(\widehat{S}_{n}^{U}-S^{U}\right)\stackrel{d}{\to}\mathcal{N}\left(0,4(\zeta_{P}^{2}-2\zeta_{PQ}+\zeta_{Q}^{2})\right)$

\end{restatable}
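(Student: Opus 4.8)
The plan is to rewrite $\hat{S}_n^U$ as a single second-order U-statistic on i.i.d.\ triples, invoke the classical U-statistic central limit theorem, and then match the resulting asymptotic variance to the expression in the statement. Put $\boldsymbol{t}_i := (\boldsymbol{x}_i,\boldsymbol{y}_i,\boldsymbol{z}_i)$; since $X_n,Y_n,Z_n$ are drawn as independent i.i.d.\ samples, the $\boldsymbol{t}_i$ are i.i.d.\ with $\boldsymbol{x}_i\perp\boldsymbol{y}_i\perp\boldsymbol{z}_i$ inside each triple. Expanding the squared norm in the definition of $\umehqr$ gives $\umehqr=\binom{n}{2}^{-1}\sum_{i<j}h_Q(\boldsymbol{t}_i,\boldsymbol{t}_j)$ with the symmetric kernel $h_Q(\boldsymbol{t},\boldsymbol{t}')=(\psi_W(\boldsymbol{y})-\psi_W(\boldsymbol{z}))^{\top}(\psi_W(\boldsymbol{y}')-\psi_W(\boldsymbol{z}'))$, and analogously $\umehpr=\binom{n}{2}^{-1}\sum_{i<j}h_P(\boldsymbol{t}_i,\boldsymbol{t}_j)$. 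Hence $\hat{S}_n^U=\binom{n}{2}^{-1}\sum_{i<j}h(\boldsymbol{t}_i,\boldsymbol{t}_j)$ is itself a degree-two U-statistic with kernel $h:=h_P-h_Q$, and $\mathbb{E}[h(\boldsymbol{t}_1,\boldsymbol{t}_2)]=\|\psipv-\psirv\|^2-\|\psiqw-\psirw\|^2=S^U$. Assuming the kernels $k_X,k_Y$ are bounded (so $\psi_V,\psi_W$ are bounded and $\mathbb{E}[h^2]<\infty$), the U-statistic CLT yields $\sqrt{n}(\hat{S}_n^U-S^U)\stackrel{d}{\to}\mathcal{N}(0,4\sigma_1^2)$, where $\sigma_1^2=\mathrm{Var}(h_1(\boldsymbol{t}_1))$ is the variance of the first conditional expectation $h_1(\boldsymbol{t}):=\mathbb{E}[h(\boldsymbol{t},\boldsymbol{t}')]$ and the prefactor $4$ is $m^2$ for $m=2$.

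It then remains to compute $\sigma_1^2$. Writing $\boldsymbol{a}:=\psipv-\psirv$ and $\boldsymbol{b}:=\psiqw-\psirw$, taking the conditional expectation of $h(\boldsymbol{t},\boldsymbol{t}')$ over $\boldsymbol{t}'$ gives
\[
h_1(\boldsymbol{t})\;=\;\bigl(\psi_V(\boldsymbol{x})-\psi_V(\boldsymbol{z})\bigr)^{\top}\boldsymbol{a}\;-\;\bigl(\psi_W(\boldsymbol{y})-\psi_W(\boldsymbol{z})\bigr)^{\top}\boldsymbol{b}.
\]
Regroup $h_1(\boldsymbol{t})$ by the sample each summand depends on: the $\boldsymbol{x}$-term $\psi_V(\boldsymbol{x})^{\top}\boldsymbol{a}$, the $\boldsymbol{y}$-term $-\psi_W(\boldsymbol{y})^{\top}\boldsymbol{b}$, and the $\boldsymbol{z}$-term $\psi_W(\boldsymbol{z})^{\top}\boldsymbol{b}-\psi_V(\boldsymbol{z})^{\top}\boldsymbol{a}$. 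Because $\boldsymbol{x},\boldsymbol{y},\boldsymbol{z}$ are mutually independent, $\mathrm{Var}(h_1)$ is the sum of the three term-variances, namely $\boldsymbol{a}^{\top}\covpv\boldsymbol{a}$, $\boldsymbol{b}^{\top}\covqw\boldsymbol{b}$, and $\boldsymbol{a}^{\top}\covrv\boldsymbol{a}-2\boldsymbol{a}^{\top}C_{VW}^{R}\boldsymbol{b}+\boldsymbol{b}^{\top}\covrw\boldsymbol{b}$ (using $\mathrm{cov}_{\boldsymbol{z}\sim R}[\psi_V(\boldsymbol{z})^{\top}\boldsymbol{a},\psi_W(\boldsymbol{z})^{\top}\boldsymbol{b}]=\boldsymbol{a}^{\top}C_{VW}^{R}\boldsymbol{b}$). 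Adding these, $\sigma_1^2=\boldsymbol{a}^{\top}(\covpv+\covrv)\boldsymbol{a}+\boldsymbol{b}^{\top}(\covqw+\covrw)\boldsymbol{b}-2\boldsymbol{a}^{\top}C_{VW}^{R}\boldsymbol{b}$, which is precisely $(1,-1)\,\boldsymbol{M}^{\top}\Sigma\boldsymbol{M}\,(1,-1)^{\top}=\zeta_P^2-2\zeta_{PQ}+\zeta_Q^2$, where $\Sigma$ is the block covariance matrix of assumption 3 and $\boldsymbol{M}$ has columns $(\boldsymbol{a};\boldsymbol{0})$ and $(\boldsymbol{0};\boldsymbol{b})$. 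Positive definiteness of $\boldsymbol{M}^{\top}\Sigma\boldsymbol{M}$ in assumption 3 forces this quadratic form at the nonzero vector $(1,-1)^{\top}$ to be strictly positive, so $\sigma_1^2>0$ and the limiting Gaussian is non-degenerate; assumptions 1--2 ensure the setup is non-trivial ($\boldsymbol{a}\ne\boldsymbol{0}$, $\boldsymbol{b}\ne\boldsymbol{0}$, so $\boldsymbol{M}$ has full column rank and $S^U$ is a difference of genuine distances).

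The computation above is routine once the right bookkeeping is in place; the one point that requires care is the shared reference sample $Z_n$. Since $\umehpr$ and $\umehqr$ both use the $\boldsymbol{z}_i$, one cannot simply add two marginal limit theorems: treating them jointly — equivalently, as the single U-statistic $\hat{S}_n^U$ above, or via a bivariate CLT for $(\umehpr,\umehqr)$ followed by the delta method with $(u,v)\mapsto u-v$ — is what produces the cross term $-2\zeta_{PQ}$, which comes entirely from $\mathrm{cov}_{\boldsymbol{z}\sim R}[\psi_V(\boldsymbol{z}),\psi_W(\boldsymbol{z})]=C_{VW}^{R}$. Identifying this cross term, and relatedly checking that the hypothesized non-degeneracy condition is exactly the one making the projection variance positive, is the only real substance of the argument; finiteness of moments is immediate from boundedness of the kernels.
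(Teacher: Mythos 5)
Your proposal is correct and follows essentially the same route as the paper's proof: represent $\umehpr$ and $\umehqr$ as second-order U-statistics on i.i.d.\ triples $(\boldsymbol{x}_i,\boldsymbol{y}_i,\boldsymbol{z}_i)$, invoke the U-statistic CLT, and compute the projection variance using the mutual independence of the three coordinates, which is exactly where the cross term $-2\zeta_{PQ}$ arises. The only cosmetic difference is that you apply the scalar CLT to the single difference kernel $h_P-h_Q$, whereas the paper applies Hoeffding's joint CLT to the pair and then projects with $(1,-1)^{\top}$ via the continuous mapping theorem; the variance bookkeeping is identical.
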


A proof of Theorem \ref{thm:normal_sume} can be found in Section
\ref{sec:proof_normal_sume} (appendix). Let $\nu:=4(\zeta_{P}^{2}-2\zeta_{PQ}+\zeta_{Q}^{2})$.
Theorem \ref{thm:normal_sume} states that the asymptotic distribution
of $\hat{S}_{n}^{U}$ is normal with the mean given by $S^{U}:=\umepr-\umeqr$.
It follows that under $H_{0}$, $S^{U}\le0$ and the $(1-\alpha)$-quantile
is $S^{U}+\sqrt{\nu}\Phi^{-1}(1-\alpha)$ where $\Phi^{-1}$ is the
quantile function of the standard normal distribution. Since $S^{U}$
is unknown in practice, we therefore adjust it to be $\sqrt{\nu}\Phi^{-1}(1-\alpha)$,
and use it as the test threshold $T_{\alpha}$. The adjusted threshold
can be estimated easily by replacing $\nu$ with $\hat{\nu}_{n}$,
a consistent estimate based on samples. It can be shown that the test
with the adjusted threshold is still level-$\alpha$ (more conservative
in rejecting $H_{0}$). We note that the same approach of adjusting
the threshold is used in $\relmmd$ \citep{BouBelBlaAntGre2015}. 

\textbf{Better Fit of $Q$ in Terms of $W$.} When specifying $V$
and $W$, the model comparison is done by comparing the goodness of
fit of $P$ (to $R$) as measured in the regions specified by $V$
to the goodness of fit of $Q$ as measured in the regions specified
by $W$. By specifying $V$ and setting $W=V$, testing with $\relume$
is equivalent to posing the question ``\emph{Does $Q$ fit to the
data better than $P$ does, as measured in the regions of $V$?}''
For instance, the observed sample from $R$ might contain smiling
and non-smiling faces, and $P,Q$ are candidate generative models
for face images. If we are interested in checking the relative fit
in the regions of smiling faces, $V$ can be a set of smiling faces.
In the followings, we will assume $V=W$ and $k:=k_{X}=k_{Y}$ for
interpretability. Investigating the general case without these constraints
will be an interesting topic of future study. Importantly we emphasize
that test results are always conditioned on the specified $V$. To
be precise, let $U_{V}^{2}$ be the squared UME statistic defined
by $V$. It is entirely realistic that the test rejects $H_{0}$ in
favor of $H_{1}\colon U_{V_{1}}^{2}(P,R)>U_{V_{1}}^{2}(Q,R)$ (i.e.,
$Q$ fits better) for some $V_{1}$, and also rejects $H_{0}$ in
favor of the opposite alternative $H_{1}\colon U_{V_{2}}^{2}(Q,R)>U_{V_{2}}^{2}(P,R)$
(i.e., $P$ fits better) for another setting of $V_{2}$. This is
because the regions in which the model comparison takes place are
different in the two cases. Although not discussed in \citet{BouBelBlaAntGre2015},
the same behaviour can be observed for $\relmmd$ i.e., test results
are conditioned on the choice of kernel.

In some cases, it is not known in advance what features are better
represented by one model vs another, and it becomes necessary to learn
these features from the model outputs. In this case, we propose setting
$V$ to contain the locations which maximize the probability that
the test can detect the better fit of $Q$, as measured at the locations.
Following the same principle as in \citet{GreSejStrBalPon2012,SutTunStrDeRam2016,JitSzaChwGre2016,JitSzaGre2017,JitXuSzaFukGre2017},
this goal can be achieved by finding $(k,V)$ which maximize the test
power, while ensuring that the test is level-$\alpha$. By Theorem
\ref{thm:normal_sume}, for large $n$ the test power $\mathbb{P}\left(\sqrt{n}\hat{S}_{n}^{U}>T_{\alpha}\right)$
is approximately $\Phi\left(\frac{\sqrt{n}S^{U}-T_{\alpha}}{\sqrt{\nu}}\right)=\Phi\left(\sqrt{n}\frac{S^{U}}{\sqrt{\nu}}-\sqrt{\frac{\hat{\nu}_{n}}{\nu}}\Phi^{-1}(1-\alpha)\right)$.
Under $H_{1}$, $S^{U}>0$. For large $n$, $\Phi^{-1}(1-\alpha)\sqrt{\hat{\nu}_{n}}/\sqrt{\nu}$
approaches a constant, and $\sqrt{n}S^{U}/\sqrt{\nu}$ dominates.
It follows that, for large $n$, $(k^{*},V^{*})=\arg\max_{(k,V)}\mathbb{P}\left(\sqrt{n}\hat{S}_{n}^{U}>T_{\alpha}\right)\approx\arg\max_{(k,V)}S^{U}/\sqrt{\nu}$.
We can thus use $\hat{S}_{n}^{U}/(\gamma+\sqrt{\hat{\nu}_{n}})$ as
an estimate of the \emph{power criterion} objective $S^{U}/\sqrt{\nu}$
for the test power, where $\gamma>0$ is a small regularization parameter
added to promote numerical stability following \citet[p.\ 5]{JitXuSzaFukGre2017}.
To control the false rejection rate, the maximization is carried out
on held-out training data which are independent of the data used for
testing. In the experiments (Section \ref{sec:experiments}), we hold
out 20\% of the data for the optimization. A unique consequence of
this procedure is that we obtain optimized $V^{*}$ which indicates
where $Q$ fits significantly better than $P$. We note that this
interpretation only holds if the test, using the optimized hyperparameters
$(k^{*},V^{*})$, decides to reject $H_{0}$. The optimized locations
may not be interpretable if the test fails to reject $H_{0}$.

\textbf{Relative FSSD ($\relfssd$)} The proposed $\relfssd$ tests
$H_{0}\colon\fssdp\le\fssdq$ versus $H_{1}\colon\fssdp>\fssdq$.
The test statistic is $\sqrt{n}\hat{S}_{n}^{F}:=\sqrt{n}(\fssdhp-\fssdhq)$.
We note that the feature functions $\boldsymbol{\tau}_{p}$ (for $\fssdp$)
and $\boldsymbol{\tau}_{q}$ (for $\fssdq$) depend on $(k_{X},V)$
and $(k_{Y},W)$ respectively, and play the same role as the feature
functions $\psi_{V}$ and $\psi_{W}$ of the UME statistic. Due to
space limitations, we only state the salient facts of $\relfssd$.
The rest of the derivations closely follow $\relume$. These include
the interpretation that the relative fit is measured at the specified
locations given in $V$ and $W$, and the derivation of $\relfssd$'s
power criterion (which can be derived using the asymptotic distribution
of $\hat{S}_{n}^{F}$ given in Theorem \ref{thm:normal_sfssd}, following
the same line of reasoning as in the case of $\relume$). A major
difference is that $\relfssd$ requires explicit (gradients of the
log) density functions of the two models, allowing it to gain structural
information of the models that may not be as easily observed in finite
samples. We next state the asymptotic distribution of the statistic
(Theorem \ref{thm:normal_sfssd}), which is needed for obtaining the
threshold and for deriving the power criterion. The proof closely
follows the proof of Theorem \ref{thm:normal_sume}, and is omitted.

\begin{thm}[Asymptotic  distribution of $\hat{S}_{n}^{F}$]
 \label{thm:normal_sfssd} Define $S^{F}:=F_{p}^{2}-F_{q}^{2}$.
Let $\boldsymbol{\Sigma}^{ss'}:=\mathrm{cov}_{\boldsymbol{z}\sim r}[\btau_{s}(\boldsymbol{z}),\btau_{s'}(\boldsymbol{z})]$
for $s,s'\in\{p,q\}$ so that $\boldsymbol{\Sigma}^{pq}\in\mathbb{R}^{dJ_{p}\times dJ_{q}}$,
$\boldsymbol{\Sigma}^{qp}:=\left(\boldsymbol{\Sigma}^{pq}\right)^{\top}$,
$\boldsymbol{\Sigma}^{pp}=\boldsymbol{\Sigma}^{p}\in\mathbb{R}^{dJ_{p}\times dJ_{p}}$,
and $\boldsymbol{\Sigma}^{qq}=\boldsymbol{\Sigma}^{q}\in\mathbb{R}^{dJ_{q}\times dJ_{q}}$.
Assume that \uline{1)} $p,q,$ and $r$ are all distinct, \uline{2)}
$(k_{X},V)$ are chosen such that $\fssdp>0$, and $(k_{Y},W)$ are
chosen such that $\fssdq>0$, \uline{3)} $\left(\begin{array}{cc}
\sigma_{p}^{2} & \sigma_{pq}\\
\sigma_{pq} & \sigma_{q}^{2}
\end{array}\right):=\left(\begin{array}{cc}
\bmu_{p}^{\top}\boldsymbol{\Sigma}^{p}\bmu_{p} & \bmu_{p}^{\top}\boldsymbol{\Sigma}^{pq}\bmu_{q}\\
\bmu_{p}^{\top}\boldsymbol{\Sigma}^{pq}\bmu_{q} & \bmu_{q}^{\top}\boldsymbol{\Sigma}^{q}\bmu_{q}
\end{array}\right)$ is positive definite. Then, $\sqrt{n}\left(\widehat{S}_{n}^{F}-S^{F}\right)\stackrel{d}{\to}\mathcal{N}\left(0,4(\sigma_{p}^{2}-2\sigma_{pq}+\sigma_{q}^{2})\right)$.
\end{thm}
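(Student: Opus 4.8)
The plan is to recognise $\sqrt{n}\,\hat{S}_{n}^{F}$ as a centred and rescaled one-sample second-order U-statistic and then invoke the classical central limit theorem for non-degenerate U-statistics (Hoeffding, 1948; Serfling, 1980, Theorem~5.5.1A), exactly paralleling the proof of Theorem~\ref{thm:normal_sume}. First I would recall from \citet{JitXuSzaFukGre2017} that $\fssdhp$ and $\fssdhq$ are the unbiased U-statistics on the single sample $Z_{n}=\{\boldsymbol{z}_{i}\}_{i=1}^{n}\sim r$ with symmetric kernels $\Delta_{p}(\boldsymbol{z},\boldsymbol{z}')=\btau_{p}^{\top}(\boldsymbol{z})\btau_{p}(\boldsymbol{z}')$ and $\Delta_{q}(\boldsymbol{z},\boldsymbol{z}')=\btau_{q}^{\top}(\boldsymbol{z})\btau_{q}(\boldsymbol{z}')$; hence $\hat{S}_{n}^{F}=\binom{n}{2}^{-1}\sum_{i<j}h(\boldsymbol{z}_{i},\boldsymbol{z}_{j})$ with the symmetric kernel $h:=\Delta_{p}-\Delta_{q}$, satisfying $\mathbb{E}_{\boldsymbol{z},\boldsymbol{z}'\sim r}[h(\boldsymbol{z},\boldsymbol{z}')]=\|\bmu_{p}\|_{2}^{2}-\|\bmu_{q}\|_{2}^{2}=\fssdp-\fssdq=S^{F}$, where I use $\fssdp=\|\bmu_{p}\|_{2}^{2}$ and $\fssdq=\|\bmu_{q}\|_{2}^{2}$. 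Because all quantities are functions of the one sample $Z_{n}$, no auxiliary pairing of independent samples is needed, so this step is if anything simpler than its analogue in Theorem~\ref{thm:normal_sume}.

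Next I would compute the Hoeffding projection $h_{1}(\boldsymbol{z}):=\mathbb{E}_{\boldsymbol{z}'\sim r}[h(\boldsymbol{z},\boldsymbol{z}')]=\btau_{p}^{\top}(\boldsymbol{z})\bmu_{p}-\btau_{q}^{\top}(\boldsymbol{z})\bmu_{q}$, whose variance governs the limiting variance of the U-statistic. A direct bilinear expansion yields
\[
\sigma_{1}^{2}:=\mathrm{Var}_{\boldsymbol{z}\sim r}[h_{1}(\boldsymbol{z})]=\bmu_{p}^{\top}\boldsymbol{\Sigma}^{p}\bmu_{p}-2\,\bmu_{p}^{\top}\boldsymbol{\Sigma}^{pq}\bmu_{q}+\bmu_{q}^{\top}\boldsymbol{\Sigma}^{q}\bmu_{q}=\sigma_{p}^{2}-2\sigma_{pq}+\sigma_{q}^{2},
\]
which is precisely the $(1,-1)$-quadratic form of the $2\times2$ matrix appearing in assumption~3. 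That assumption states the matrix is positive definite, hence $\sigma_{1}^{2}>0$ and the U-statistic is non-degenerate; assumptions~1--2 (distinctness of $p,q,r$ and strict positivity of $\fssdp,\fssdq$) are what make non-degeneracy plausible, since they force $\bmu_{p},\bmu_{q}\neq\boldsymbol{0}$. I would also invoke the integrability and smoothness conditions on $k_{X},k_{Y}$ and on $\nabla\log p,\nabla\log q$ already required to define $\fssdhp$ and $\fssdhq$ in \citet{JitXuSzaFukGre2017}: these guarantee that the covariance operators $\boldsymbol{\Sigma}^{ss'}$ are finite and that $\mathbb{E}[h(\boldsymbol{z},\boldsymbol{z}')^{2}]<\infty$, the one remaining hypothesis of the U-statistic CLT.

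Applying the CLT for non-degenerate second-order U-statistics then gives $\sqrt{n}(\hat{S}_{n}^{F}-S^{F})\stackrel{d}{\to}\mathcal{N}(0,4\sigma_{1}^{2})=\mathcal{N}\bigl(0,4(\sigma_{p}^{2}-2\sigma_{pq}+\sigma_{q}^{2})\bigr)$, which is the claim. I expect the only mildly delicate step to be the bookkeeping in the second paragraph --- verifying $\mathbb{E}[h^{2}]<\infty$ and that each $\boldsymbol{\Sigma}^{ss'}$ is well defined and finite --- but this reduces to exactly the moment and smoothness assumptions already in force for the two individual FSSD estimators, so no new idea beyond the proof of Theorem~\ref{thm:normal_sume} is needed.
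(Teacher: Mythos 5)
Your proof is correct and follows essentially the same route as the paper, which omits the proof of Theorem~\ref{thm:normal_sfssd} precisely because it mirrors that of Theorem~\ref{thm:normal_sume}: a Hoeffding CLT for non-degenerate second-order U-statistics, with assumption~3 guaranteeing $\sigma_{p}^{2}-2\sigma_{pq}+\sigma_{q}^{2}>0$. The only (immaterial) difference is that you contract first --- forming the single scalar kernel $h=\Delta_{p}-\Delta_{q}$ on the one sample $Z_{n}$ and applying the univariate U-statistic CLT --- whereas the paper's template applies the joint bivariate CLT to $(\fssdhp,\fssdhq)$ and then contracts with $(1,-1)^{\top}$ via the continuous mapping theorem; both yield the same projection variance and the same limit.
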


\section{Experiments}

\vspace{-2mm}\label{sec:experiments} In this section, we demonstrate
the two proposed tests on both toy and real problems. We start with
an illustration of the behaviors of $\relume$ and $\relfssd$'s power
criteria using simple one-dimensional problems. In the second experiment,
we examine the test powers of the two proposed tests using three toy
problems. In the third experiment, we compare two hypothetical generative
models on the CIFAR-10 dataset \citep{KriHin2009} and demonstrate
that the learned test locations (images) can clearly indicate the
types of images that are better modeled by one of the two candidate
models.  In the last two experiments, we consider the problem of
determining the relative goodness of fit of two given Generative Adversarial
Networks (GANs) \citep{GooPouMirXuWar2014}. Code to reproduce all
the results is available at \url{https://github.com/wittawatj/kernel-mod}.

\newcommand{\exsehist}{Informative Power Objective}  
\newcommand{\exsecontopt}{Learning Unseen, Distinguishing Faces}  
\newcommand{\exsepowtoy}{Test Powers on Toy Problems}
\newcommand{\exseganrej}{Testing GAN Models}  
\newcommand{\exseganfeature}{Examining GAN Training}  

\textbf{1. Illustration of $\relume$ and $\relfssd$ Power Criteria}
We consider $k=k_{X}=k_{Y}$ to be a Gaussian kernel, and set $V=W=\{\boldsymbol{v}\}$
(one test location). The power criterion of $\relume$ as a function
of $\boldsymbol{v}$ can be written as $\frac{1}{2}\frac{\mathrm{wit}_{P,R}^{2}(\boldsymbol{v})-\mathrm{wit}_{Q,R}^{2}(\boldsymbol{v})}{(\zeta_{P}^{2}(\boldsymbol{v})-2\zeta_{PQ}(\boldsymbol{v})+\zeta_{Q}^{2}(\boldsymbol{v}))^{1/2}}$
where $\mathrm{wit}(\cdot)$ is the MMD witness function (see Section
\ref{sec:ume}), and we explicitly indicate the dependency on $\boldsymbol{v}$.
To illustrate, we consider two Gaussian models $p,q$ with different
means but the same variance, and set $r$ to be a mixture of $p$
and $q$. Figure \ref{fig:mmd_wit_isomix} shows that when each component
in $r$ has the same mixing proportion, the power criterion of $\relume$
is a zero function indicating that $p$ and $q$ have the same goodness
of fit to $r$ everywhere. To understand this, notice that at the
left mode of $r$, $p$ has excessive probability mass (compared to
$r$), while $q$ has almost no mass at all. Both models are thus
wrong at the left mode of $r$. However, since the extra probability
mass of $p$ is equal to the missing mass of $q$, $\relume$ considers
$p$ and $q$ as having the same goodness of fit. In Figure \ref{fig:mmd_wit_skewmix},
the left mode of $r$ now has a mixing proportion of only 30\%, and
$r$ more closely matches $q$. The power criterion is thus positive
at the left mode indicating that $q$ has a better fit.

The power criterion of $\relfssd$ indicates that $q$ fits better
at the right mode of $r$ in the case of equal mixing proportion (see
Figure \ref{fig:stein_wit_isomix}). In one dimension, the Stein witness
function $\boldsymbol{g}^{q,r}$ (defined in Section \ref{sec:fssd})
can be written as $g^{q,r}(w)=\mathbb{E}_{z\sim r}\left[k_{Y}(z,w)\nabla_{z}\left(\log q(z)-\log r(z)\right)\right]$,
which is the expectation under $r$ of the difference in the derivative
log of $q$ and $r$, weighted by the kernel $k_{Y}$. The Stein witness
thus only captures the matching of the shapes of the two densities
(as given by the derivative log). Unlike the MMD witness, the Stein
witness is insensitive to the mismatch of probability masses i.e.,
it is independent of the normalizer of $q$. In Figure \ref{fig:stein_wit_isomix},
since the shape of $q$ and the shape of the right mode of $r$ match,
the Stein witness $g^{q,r}$ (dashed blue curve) vanishes at the right
mode of $r$, indicating a good fit of $q$ in the region. The mismatch
between the shape of $q$ and the shape of $r$ at the left mode of
$r$ is what creates the peak of $g^{q,r}$. The same reasoning holds
for the Stein witness $g^{p,r}$. The power criterion of $\relfssd$,
which is given by $\frac{1}{2}\frac{g^{p,r}(w)^{2}-g^{q,r}(w)^{2}}{(\sigma_{p}^{2}(w)-2\sigma_{pq}(w)+\sigma_{q}^{2}(w))^{1/2}}$,
is thus positive at the right mode of $r$ (shapes of $q$ and $r$
matched there), and negative at the left mode of $r$ (shapes of $p$
and $r$ matched there). To summarize, $\relume$ measures the relative
fit by checking the probability mass, while $\relfssd$ does so by
matching the shapes of the densities. 

\begin{figure}
\hspace{-11mm}\subfloat[$\protect\relume$\label{fig:mmd_wit_isomix}]{\includegraphics[bb=80bp 40bp 510bp 305bp,clip,width=0.24\textwidth]{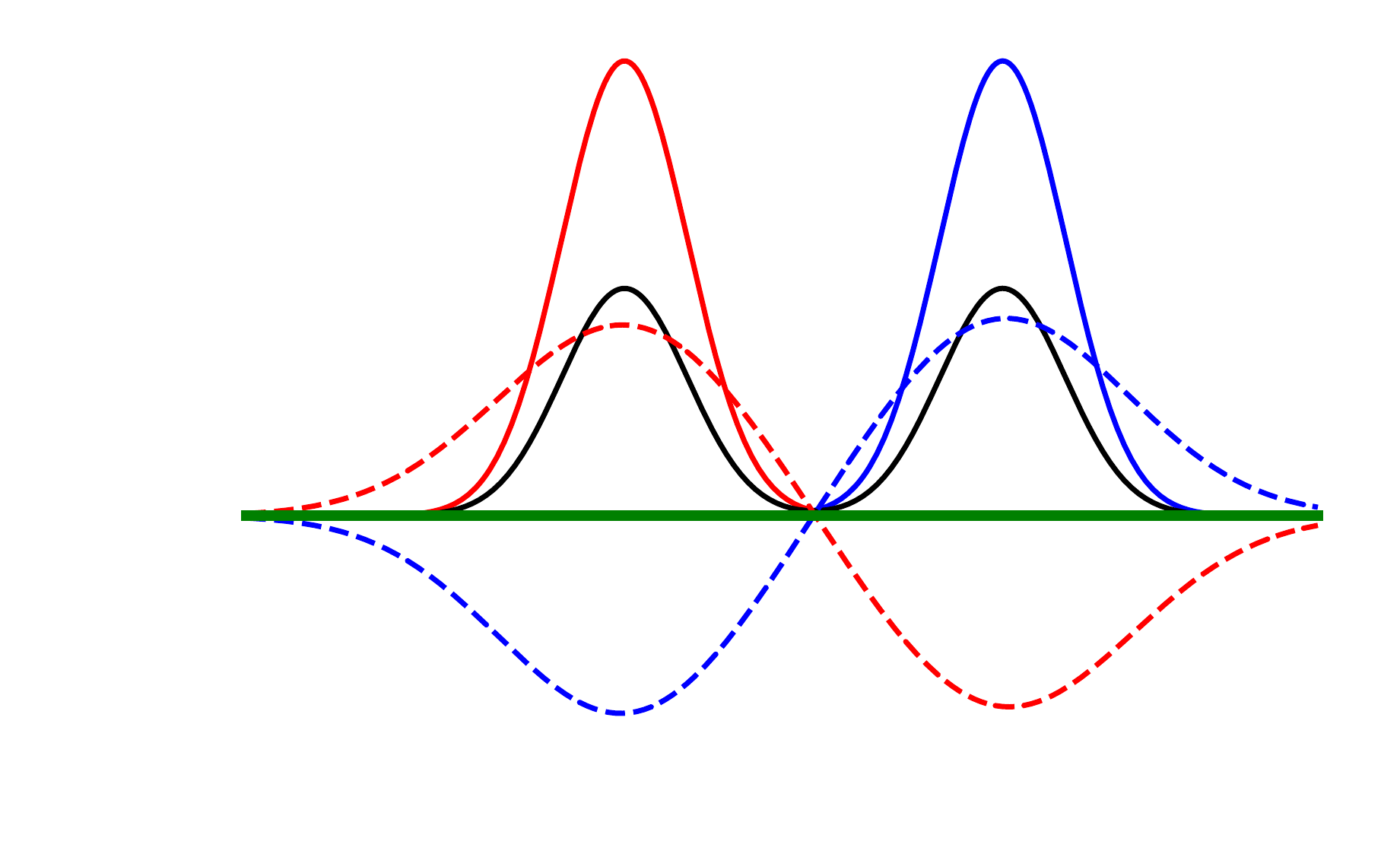}

}\hspace{-0mm}\subfloat[$\protect\relume$\label{fig:mmd_wit_skewmix}]{\includegraphics[bb=70bp 40bp 500bp 305bp,clip,width=0.25\textwidth]{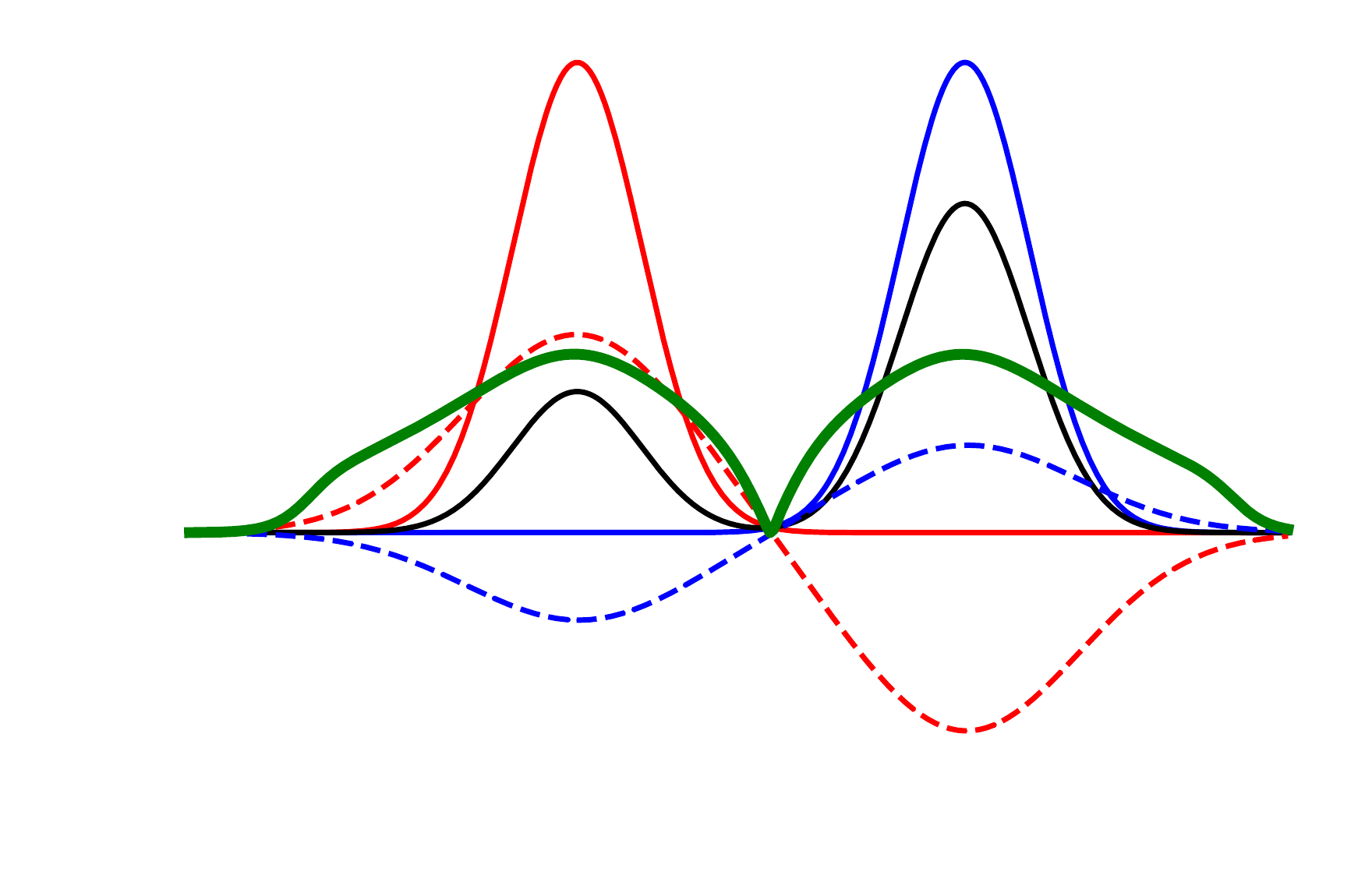} 

}\hspace{-2mm}\includegraphics[width=0.15\textwidth]{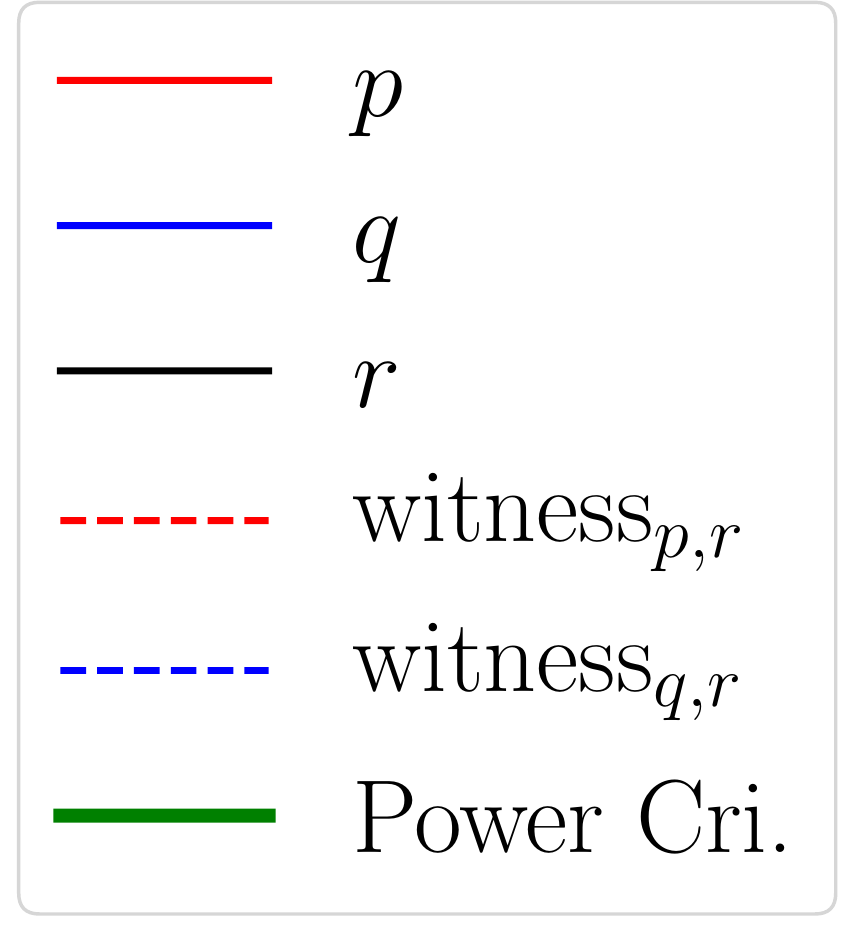}\subfloat[$\protect\relfssd$\label{fig:stein_wit_isomix}]{\includegraphics[bb=60bp 40bp 490bp 305bp,clip,width=0.24\textwidth]{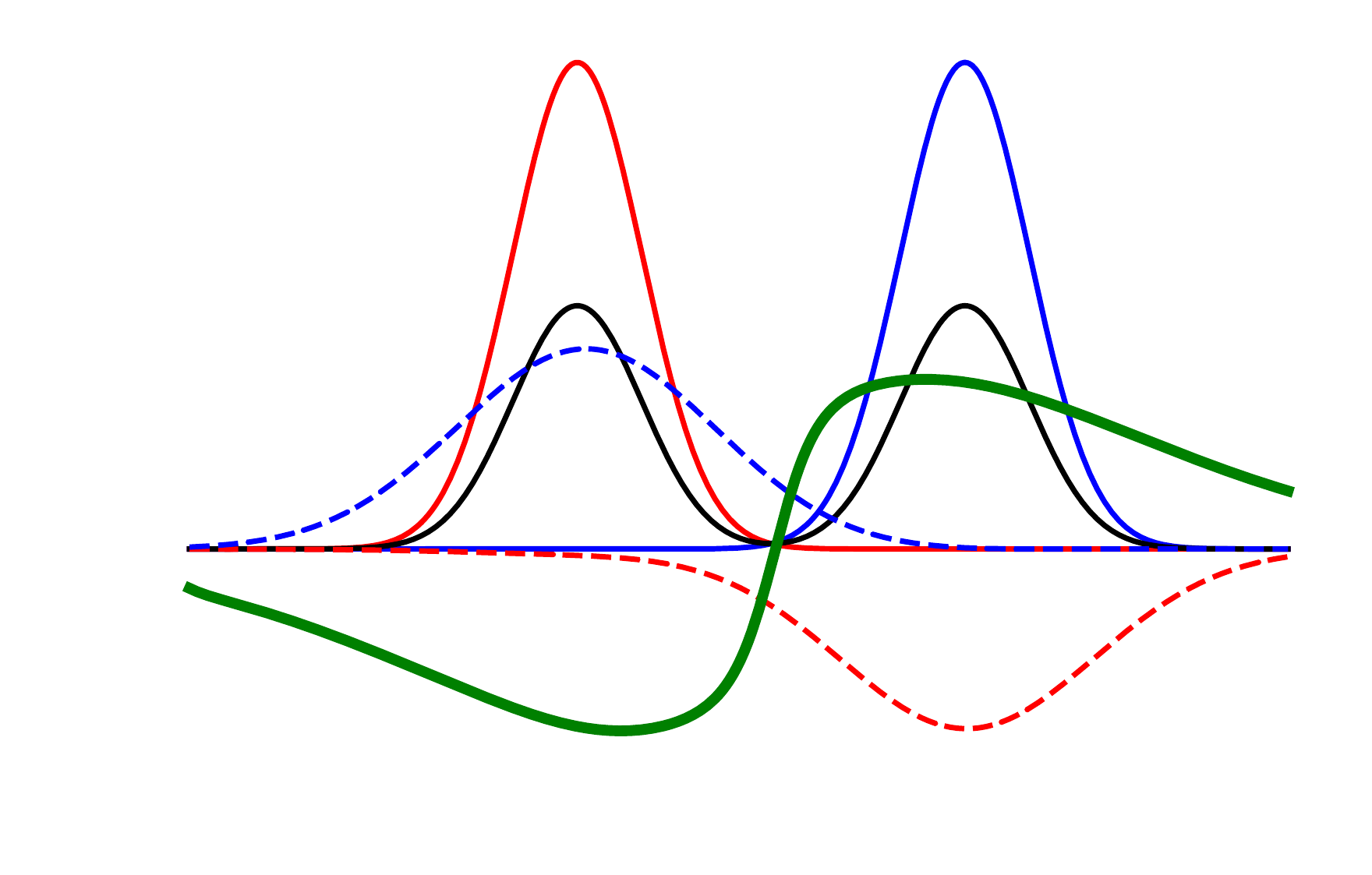}

}\subfloat[$\protect\relfssd$\label{fig:stein_wit_skewmix}]{\includegraphics[bb=60bp 40bp 490bp 305bp,clip,width=0.24\textwidth]{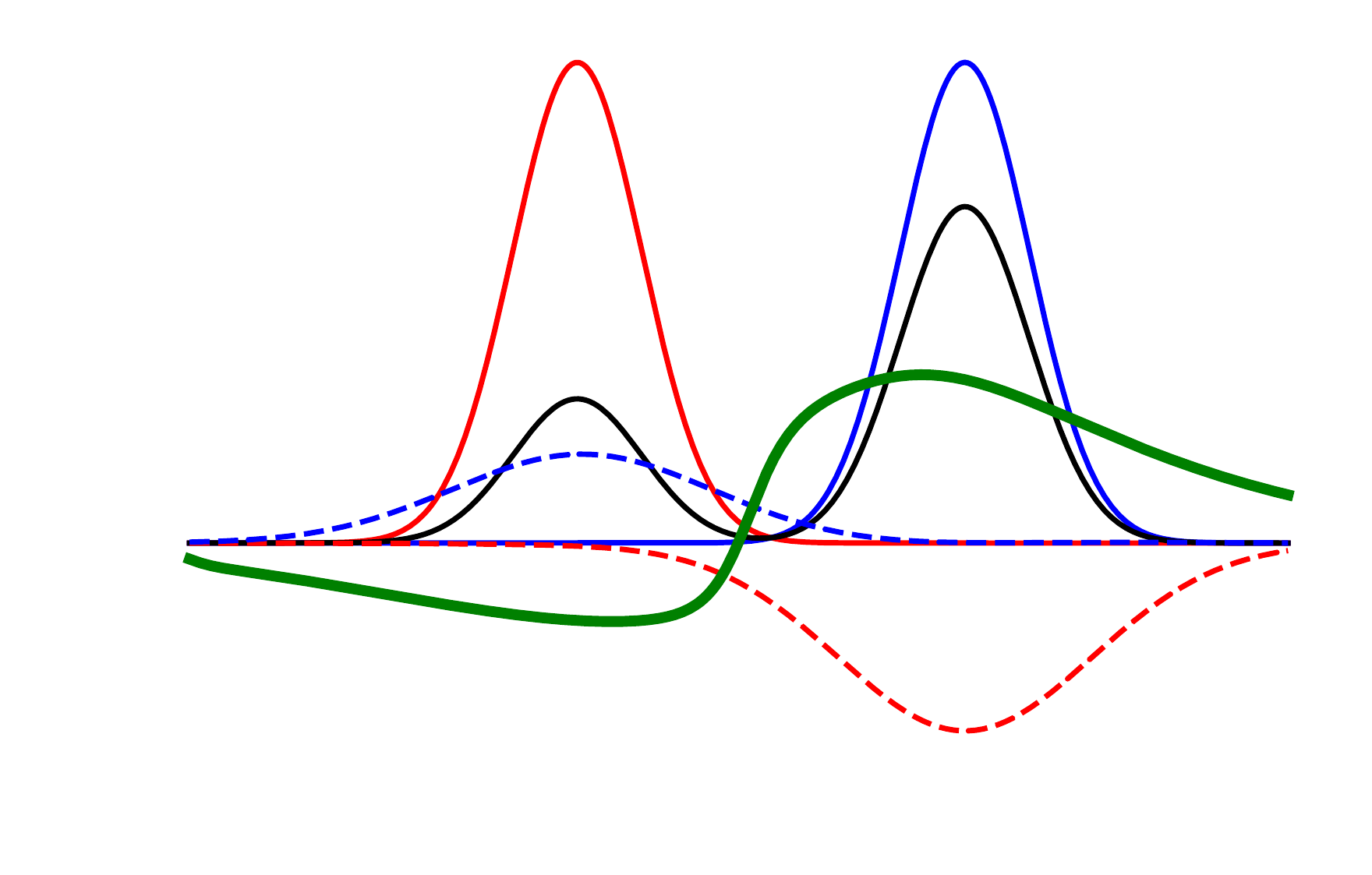}

}\caption{One-dimensional plots (in green) of $\protect\relume$'s power criteria
(in (a), (b)), and $\protect\relfssd$'s power criteria (in (c), (d)).
The dashed lines in (a), (b) indicate MMD's witness functions used
in $\protect\relume$, and the dashed lines in (c), (d) indicate FSSD's
Stein witness functions. \label{fig:witness_illustration} \vspace*{-4mm}}
\end{figure}

\textbf{2. \exsepowtoy} The goal of this experiment is to investigate the rejection rates
of several variations of the two proposed tests. To this end, we study
three toy problems, each having its own characteristics. All the three
distributions in each problem have density functions to allow comparison
with $\relfssd$. 
\begin{enumerate}
\item \emph{Mean shift}:  All the three distributions are isotropic multivariate
normal distributions: $p=\mathcal{N}([0.5,0,\ldots,0],\boldsymbol{I}),q=\mathcal{N}([1,0,\ldots0],\boldsymbol{I}),$
and $r=\mathcal{N}(\boldsymbol{0},\boldsymbol{I})$, defined on $\mathbb{R}^{50}$.
The two candidates models $p$ and $q$ differ in the mean of the
first dimension. In this problem, the null hypothesis $H_{0}$ is
true since $p$ is closer to $r$.
\item \emph{Blobs}: Each distribution is given by a mixture of four Gaussian
distributions organized in a grid in $\mathbb{R}^{2}$. Samples from
$p,q$ and $r$ are shown in Figure \ref{fig:blobs_2d_samples}. In
this problem, $q$ is closer to $r$ than $p$ is i.e., $H_{1}$ is
true. One characteristic of this problem is that the difference between
$p$ and $q$ takes place in a small scale relative to the global
structure of the data. This problem was studied in \citet{GreSejStrBalPon2012,ChwRamSejGre2015}. 
\item \emph{RBM}: Each of the three distributions is given by a Gaussian
Bernoulli Restricted Boltzmann Machine (RBM) model with density function
$p'_{\boldsymbol{B},\boldsymbol{b},\boldsymbol{c}}(\boldsymbol{x})=\sum_{\boldsymbol{h}}p'_{\boldsymbol{B},\boldsymbol{b},\boldsymbol{c}}(\boldsymbol{x},\boldsymbol{h})$,
where $p'_{\boldsymbol{B},\boldsymbol{b},\boldsymbol{c}}(\boldsymbol{x},\boldsymbol{h}):=\frac{1}{Z}\exp\left(\boldsymbol{x}^{\top}\boldsymbol{B}\boldsymbol{h}+\boldsymbol{b}^{\top}\boldsymbol{x}+\boldsymbol{c}^{\top}\boldsymbol{h}-\frac{1}{2}\|\boldsymbol{x}\|^{2}\right)$,
$\boldsymbol{h}\in\{-1,1\}^{d_{h}}$ is a latent vector, $Z$ is the
normalizer, and $\boldsymbol{B},\boldsymbol{b},\boldsymbol{c}$ are
model parameters. Let $r(\boldsymbol{x}):=p'_{\boldsymbol{B},\boldsymbol{b},\boldsymbol{c}}(\boldsymbol{x}),p(\boldsymbol{x}):=p'_{\boldsymbol{B}^{p},\boldsymbol{b},\boldsymbol{c}}(\boldsymbol{x}),$
and $q(\boldsymbol{x}):=p'_{\boldsymbol{B}^{q},\boldsymbol{b},\boldsymbol{c}}(\boldsymbol{x})$.
Following a similar setting as in \citet{LiuLeeJor2016,JitXuSzaFukGre2017},
we set the parameters of the data generating density $r$ by uniformly
randomly setting entries of $\boldsymbol{B}$ to be from $\{-1,1\}$,
and drawing entries of $\boldsymbol{b}$ and $\boldsymbol{c}$ from
the standard normal distribution. Let $\boldsymbol{\delta}$ be a
matrix of the same size as $\boldsymbol{B}$ such that $\delta_{1,1}=1$
and all other entries are 0. We set $\boldsymbol{B}^{q}=\boldsymbol{B}+0.3\boldsymbol{\delta}$
and $\boldsymbol{B}^{p}=\boldsymbol{B}+\epsilon\boldsymbol{\delta}$,
where the perturbation constant $\epsilon$ is varied. We fix the
sample size $n$ to 2000. Perturbing only one entry of\textbf{ }$\boldsymbol{B}$
creates a problem in which the difference of distributions can be
difficult to detect. This serves as a challenging benchmark to measure
the sensitivity of statistical tests \citep{JitXuSzaFukGre2017}.
We set $d=20$ and $d_{h}=5$. 
\end{enumerate}
\begin{figure}
\centering
\includegraphics[width=0.98\linewidth]{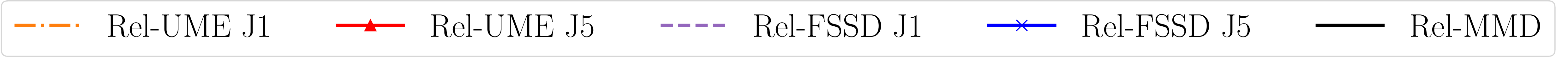} \\
\vspace{-3mm}
\subfloat[Mean shift. $d=50$. \label{fig:ex1_gauss_h0}]{
\includegraphics[width=0.24\linewidth]{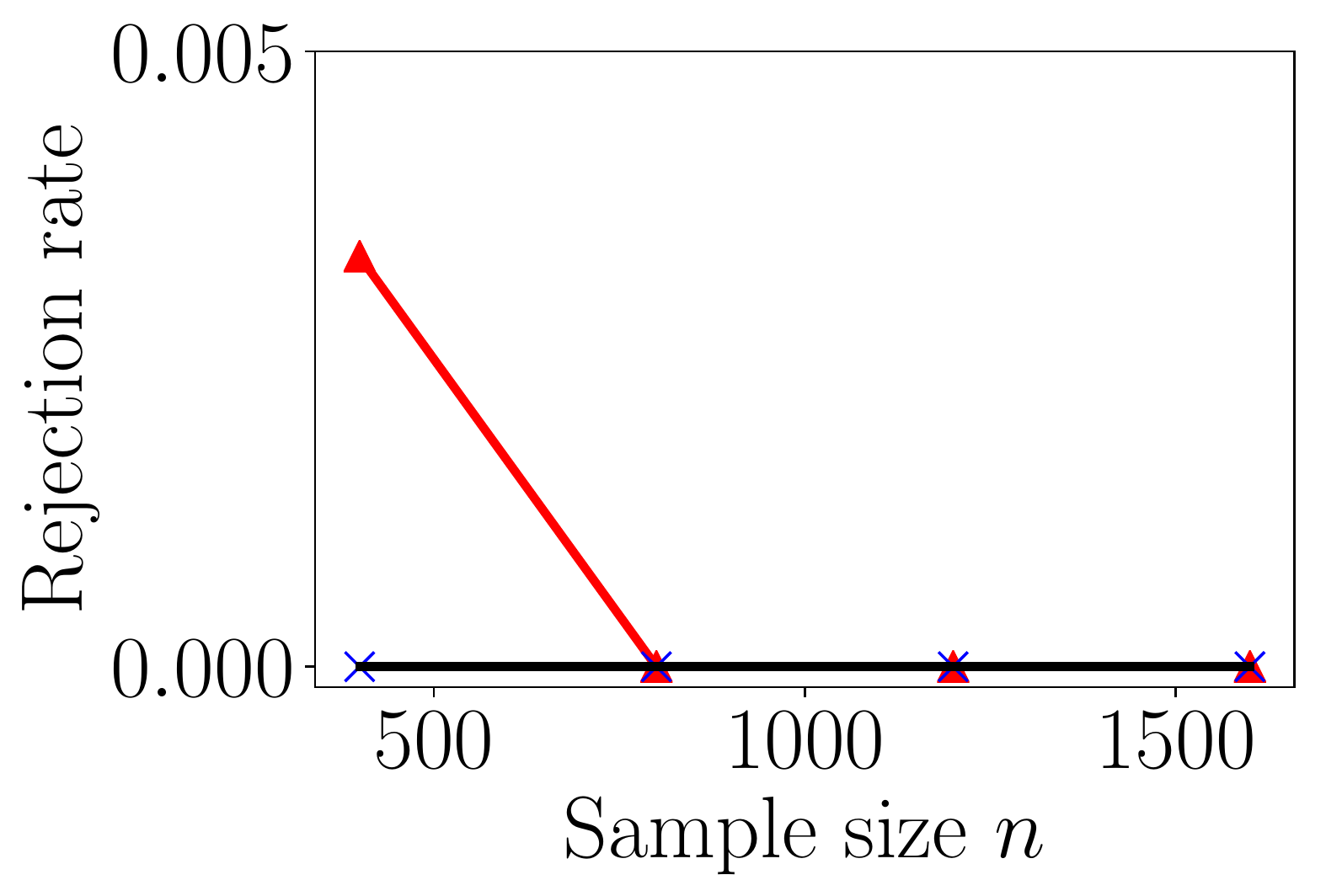}
}%
\subfloat[Blobs. $d=2$. \label{fig:ex1_blobs}]{
\includegraphics[width=0.23\linewidth]{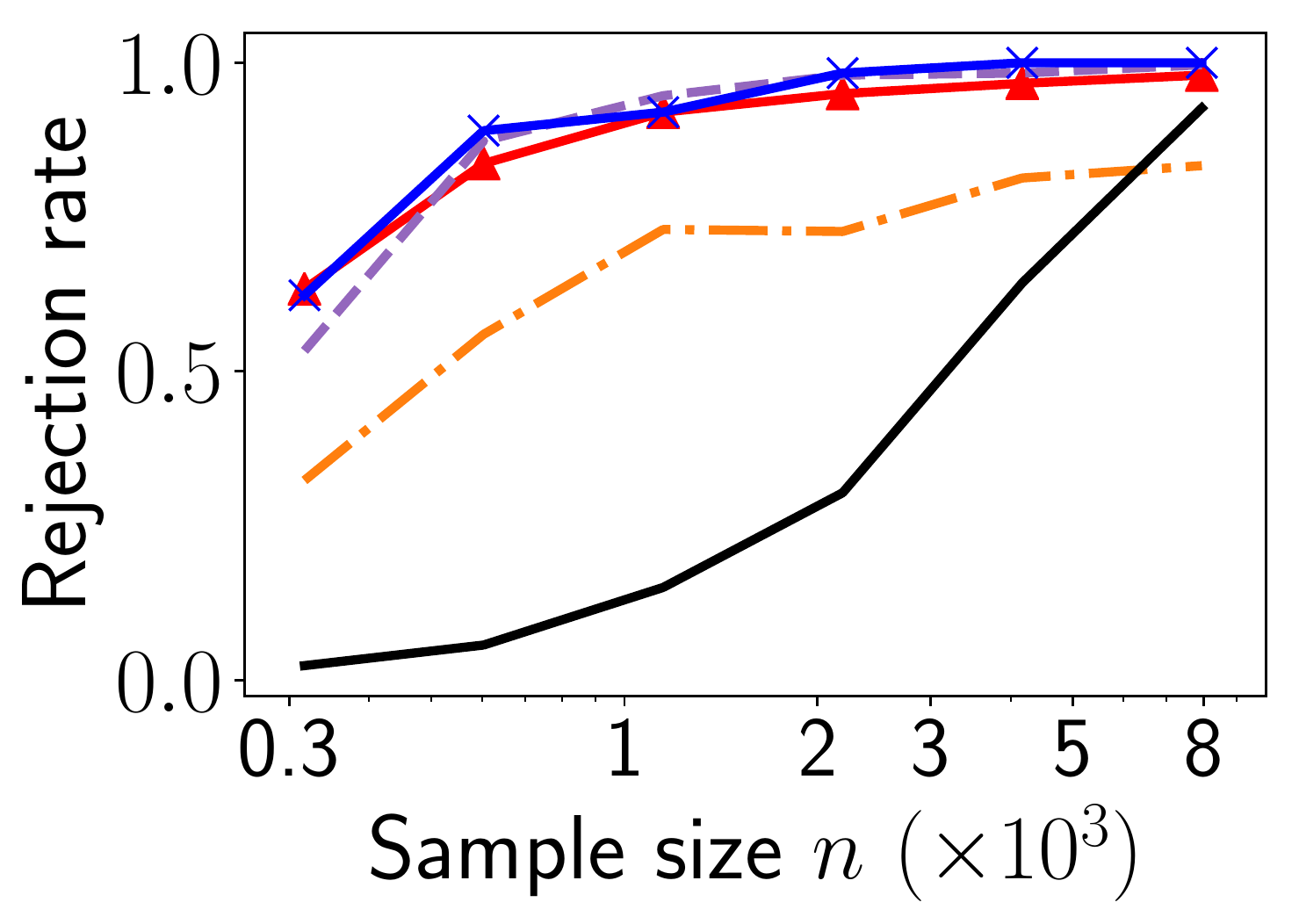} 
}%
\subfloat[Blobs (Runtime) \label{fig:ex1_blobs_time}]{
\includegraphics[width=0.24\linewidth]{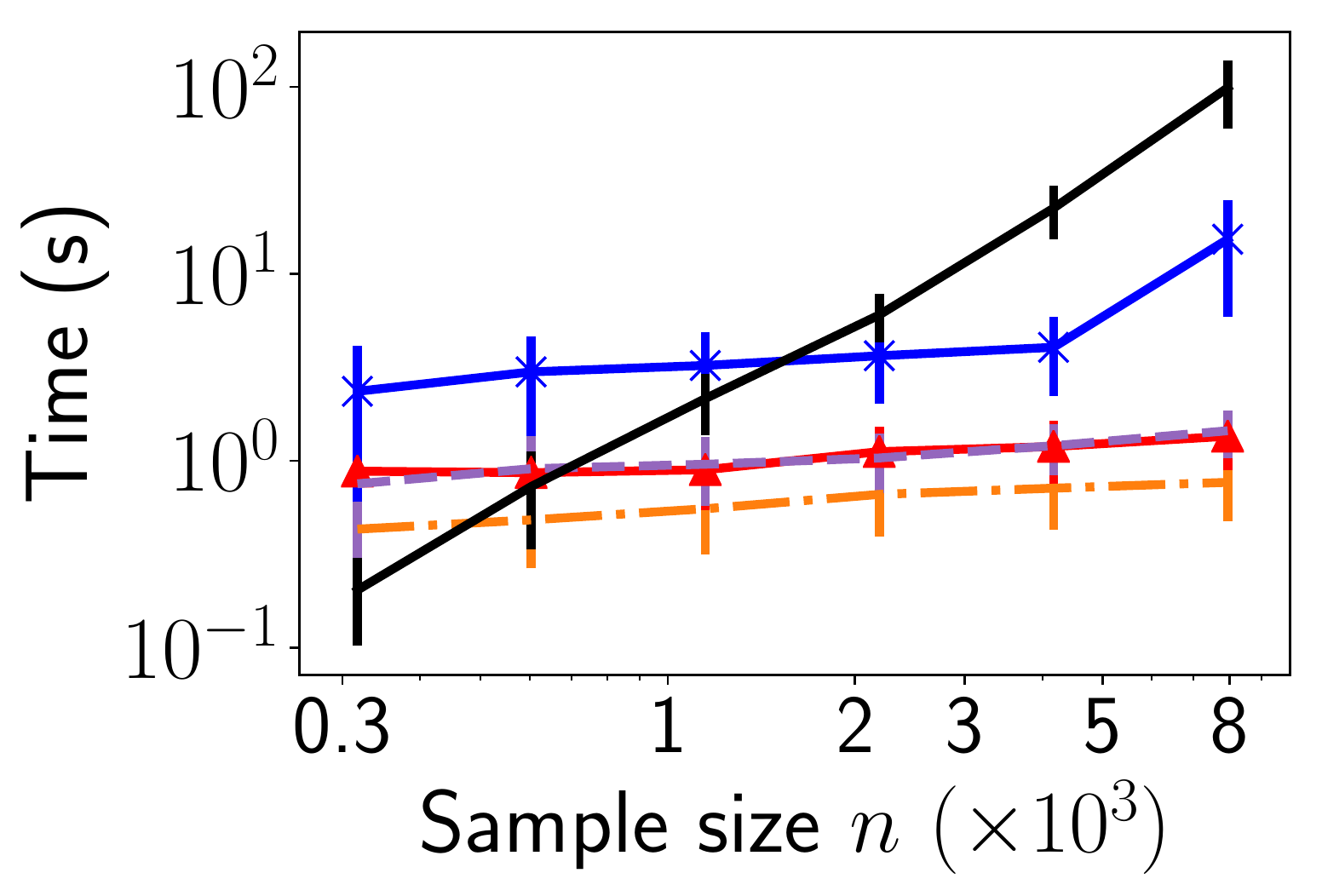}
}
\subfloat[RBM. $d=20$\label{fig:ex2_rbm}]{
\includegraphics[width=0.23\linewidth]{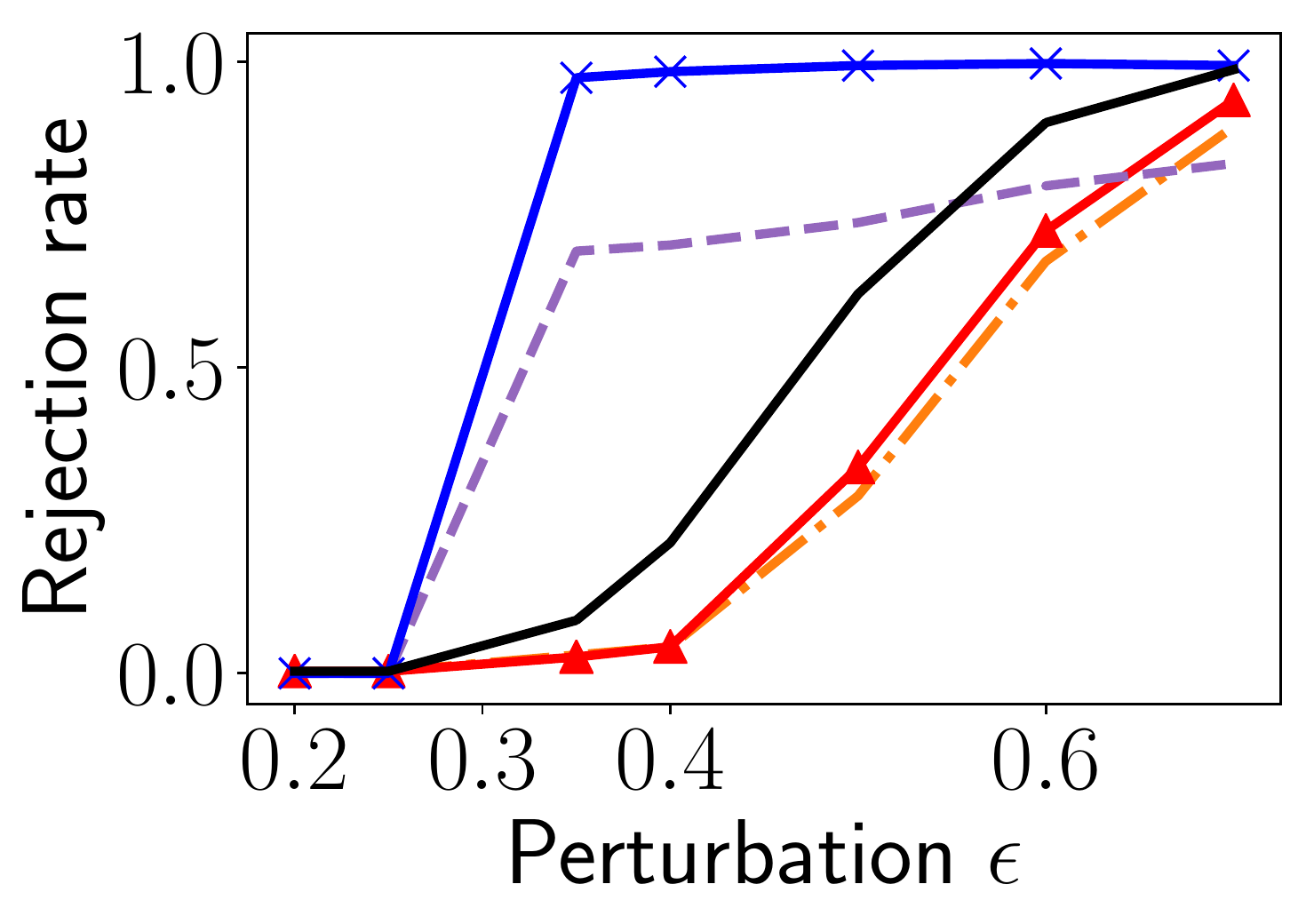}
}%

\caption{(a), (b), (d) Rejection rates (estimated from 300 trials) of the five
tests with $\alpha=0.05$. In the RBM problem, $n=2000$. (c) Runtime
in seconds for one trial in the Blobs problem.  \vspace{-5mm}\label{fig:expr_test_powers}}
\end{figure}

We compare three kernel-based tests: $\relume$, $\relfssd$, and
$\relmmd$ (the relative MMD test of \citet{BouBelBlaAntGre2015}),
all using a Gaussian kernel. For $\relume$ and $\relfssd$ we set
$k_{X}=k_{Y}=k$, where the the Gaussian width of $k$, and the test
locations are chosen by maximizing their respective power criteria
described in Section \ref{sec:new_mctests} on 20\% of the data. The
optimization procedure is described in Section \ref{sec:test_locs_opt}
(appendix).  Following \citet{BouBelBlaAntGre2015}, the Gaussian
width of $\relmmd$ is chosen by the median heuristic as implemented
in the code by the authors. In the RBM problem, all problem parameters
$\boldsymbol{B},\boldsymbol{b},$ and $\boldsymbol{c}$ are drawn
only once and fixed. Only the samples vary across trials.  

\begin{figure}
\begin{centering}
\hspace{-2mm}
\newcommand{\psize}{5cm}\subfloat[Power Criterion\label{fig:cifar10_hist_ume_powcri}]{\includegraphics[width=0.24\textwidth]{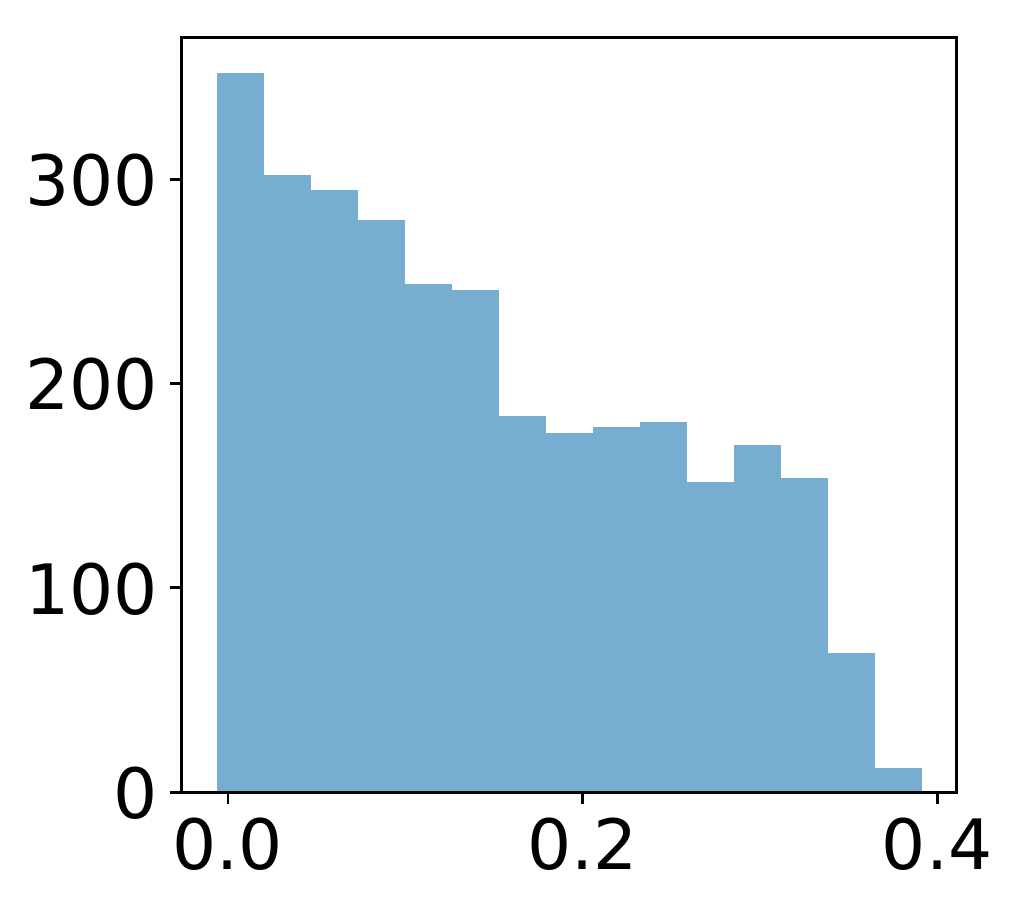}

}\,\subfloat[Sorted in ascending order\label{fig:cifar10_ume_powcri_0ascending}]{\includegraphics[width=\psize]{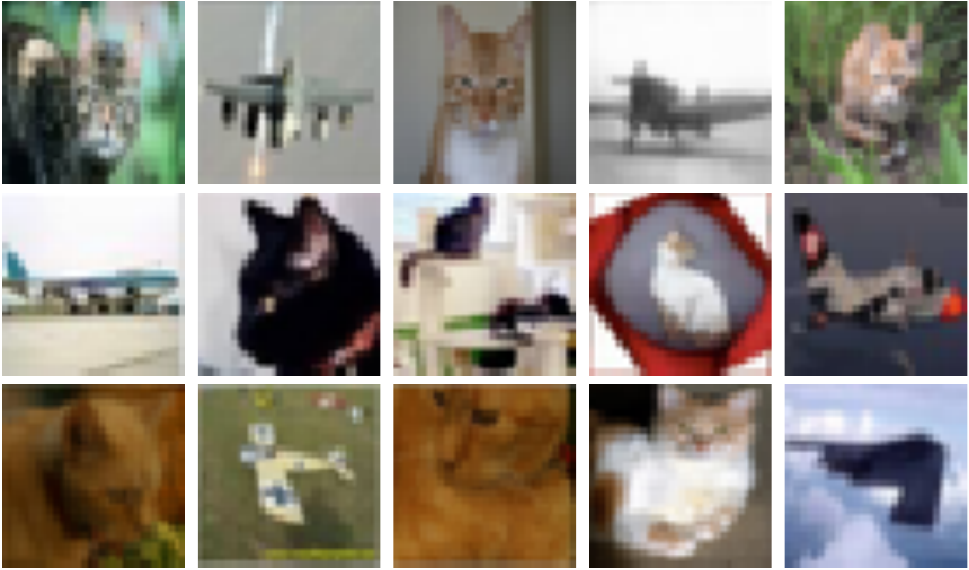}%

}\hspace{5mm}\subfloat[Sorted in descending order\label{fig:cifar10_ume_powcri_descending}]{\includegraphics[width=\psize]{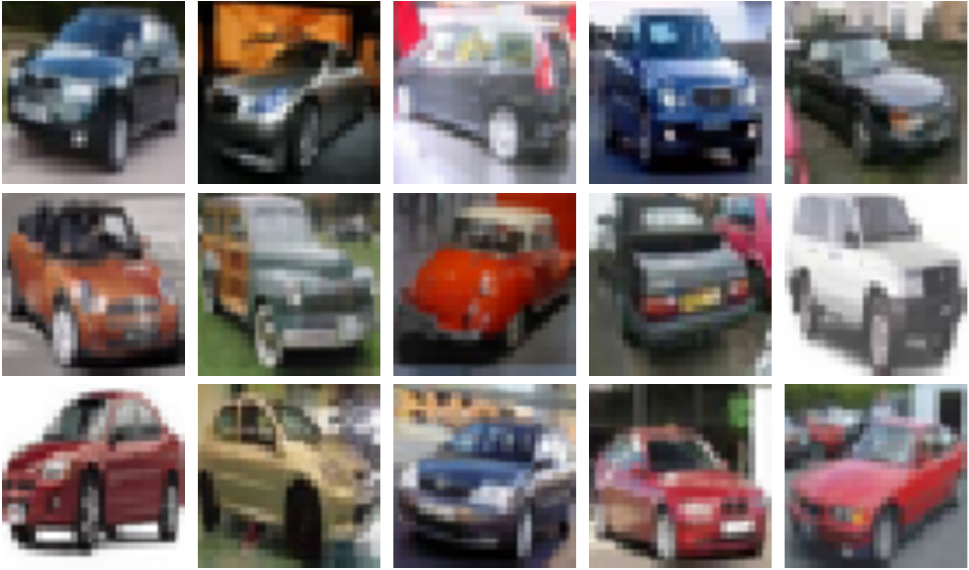}%

}
\par\end{centering}
\centering{}\caption{$P=$ \{airplane, cat\}, $Q=$ \{automobile, cat\}, and $R=$ \{automobile,
cat\}. (a) Histogram of $\protect\relume$ power criterion values.
(b), (c) Images as sorted by the criterion values in ascending and
descending orders, respectively. \vspace{-5mm}}
\end{figure}

\begin{wrapfigure}{l}{0.25\textwidth}   
\vspace{-5mm}
\begin{center}     
\includegraphics[width=3.5cm]{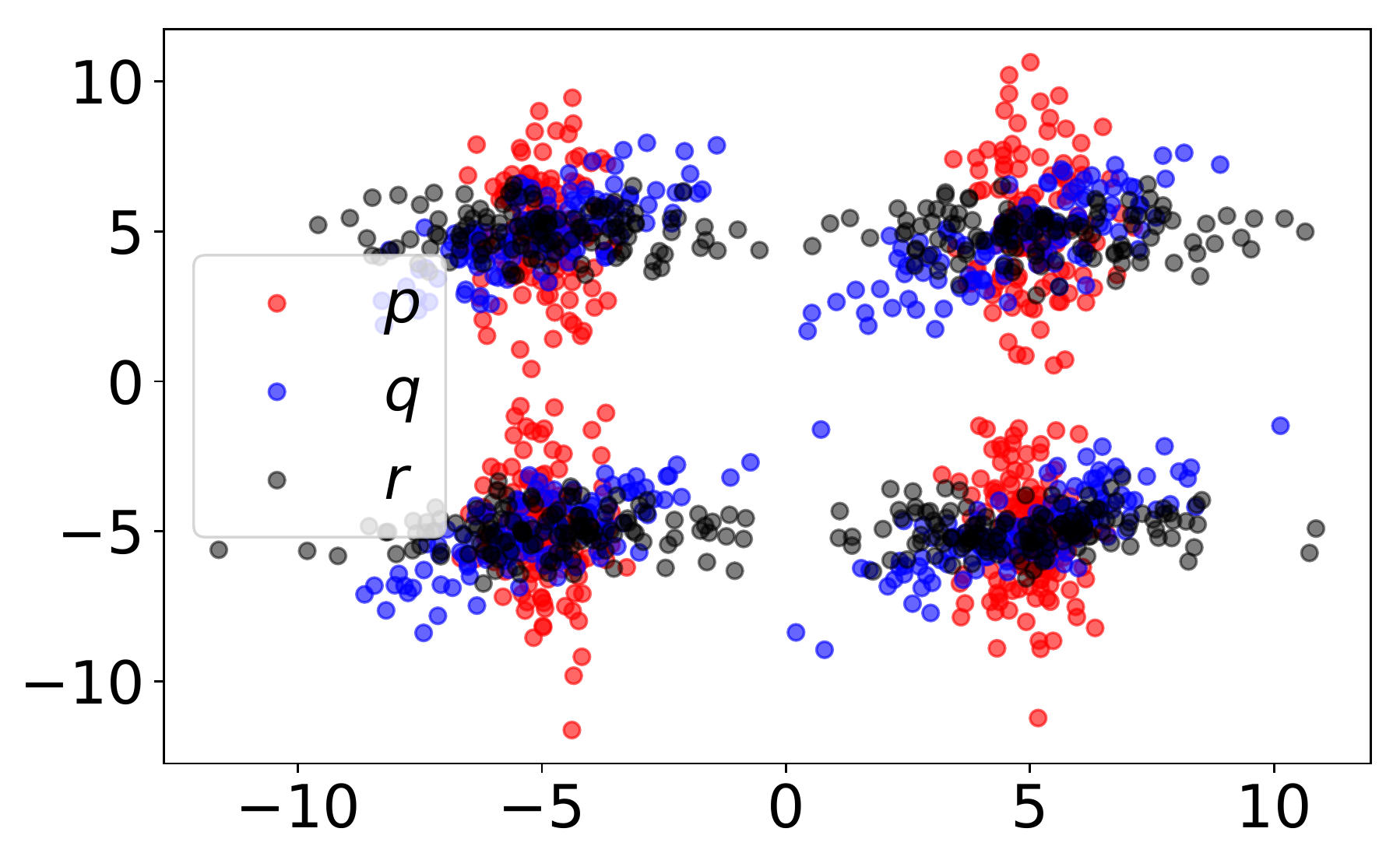}
\end{center}   
\caption{Blobs problem samples: \textcolor{red}{$p$}, \textcolor{blue}{$q$}, $r$.} 
\label{fig:blobs_2d_samples}
\vspace{-3mm}
\end{wrapfigure} Figure \ref{fig:expr_test_powers} shows the test powers of all the
tests. When $H_{0}$ holds, all tests have false rejection rates (type-I
errors) bounded above by $\alpha=0.05$ (Figure \ref{fig:ex1_gauss_h0}).
In the Blobs problem (Figure \ref{fig:ex1_blobs}), it can be seen
that $\relume$ achieves larger power at all sample sizes, compared
to $\relmmd$. Since the relative goodness of fit of $p$ and $q$
must be compared locally, the optimized test locations of $\relume$
are suitable for detecting such local differences. The poor performance
of $\relmmd$ is caused by unsuitable choices of the kernel bandwidth.
The bandwidth chosen by the median heuristic is only appropriate for
capturing the global length scale of the problem. It is thus too large
to capture small-scale differences. No existing work has proposed
a kernel selection procedure for $\relmmd$. Regarding the number
$J$ of test locations, we observe that changing $J$ from 1 to 5
drastically increases the test power of $\relume$, since more regions
characterizing the differences can be pinpointed. $\relmmd$ exhibits
a quadratic-time profile (Figure \ref{fig:ex1_blobs_time}) as a function
of $n$.

Figure \ref{fig:ex2_rbm} shows the rejection rates against the perturbation
strength $\epsilon$ in $p$ in the RBM problem. When $\epsilon\le0.3$,
$p$ is closer to $r$ than $q$ is (i.e., $H_{0}$ holds). We observe
that all the tests have well-controlled false rejection rates in this
case. At $\epsilon=0.35$, while $q$ is closer (i.e., $H_{1}$ holds),
the relative amount by which $q$ is closer to $r$ is so small that
a significant difference cannot be detected when $p$ and $q$ are
represented by samples of size $n=2000$, hence the low powers of
$\relume$ and $\relmmd$. Structural information provided by the
density functions allows $\relfssd$ (both $J=1$ and $J=5$) to detect
the difference even at $\epsilon=0.35$, as can be seen from the high
test powers. The fact that $\relmmd$ has higher power than $\relume$,
and the fact that changing $J$ from 1 to 5 increases the power only
slightly suggest that the differences may be spatially diffuse (rather
than local). 

\textbf{3. \exsehist}  In this part, we demonstrate that test locations having positive
(negative) values of the power criterion correctly indicate the regions
in which $Q$ has a better (worse) fit. We consider image samples
from three categories of the CIFAR-10 dataset \citep{KriHin2009}:
airplane, automobile, and cat. We partition the images, and assume
that the sample from $P$ consists of 2000 airplane, 1500 cat images,
the sample from $Q$ consists of 2000 automobile, 1500 cat images,
and the reference sample from $R$ consists of 2000 automobile, 1500
cat images. All samples are independent. We consider a held-out random
sample consisting of 1000 images from each category, serving as a
pool of test location candidates. We set the kernel to be the Gaussian
kernel on 2048 features extracted by the Inception-v3 network at the
pool3 layer \citep{SzeVanIofShlWoj2016}. We evaluate the power criterion
of $\relume$ at each of the test locations in the pool individually.
The histogram of the criterion values is shown in Figure \ref{fig:cifar10_hist_ume_powcri}.
We observe that all the power criterion values are non-negative, confirming
that $Q$ is better than $P$ everywhere. Figure \ref{fig:cifar10_ume_powcri_0ascending}
shows the top 15 test locations as sorted in ascending order by the
criterion, consisting of automobile images. These indicate the regions
in the data domain where $Q$ fits better. Notice that cat images
do not have high positive criterion values because they can be modeled
equally well by $P$ and $Q$, and thus have scores close to zero
as shown in Figure \ref{fig:cifar10_ume_powcri_0ascending}.

\textbf{4. \exseganrej{}} In this experiment, we apply the proposed $\relume$ test to comparing
two generative adversarial networks (GANs) \citep{GooPouMirXuWar2014}.
 We consider the CelebA dataset \citep{LiuLuoWanTan2015}\footnote{CelebA dataset: \url{http://mmlab.ie.cuhk.edu.hk/projects/CelebA.html}.}
in which each data point is an image of a celebrity with 40 binary
attributes annotated e.g., pointy nose, smiling, mustache, etc. We
create a partition of the images on the \emph{smiling} attribute,
thereby creating two disjoint subsets of\emph{ smiling} and \emph{non-smiling}
images. A set of 30000 images from each subset is held out for subsequent
relative goodness-of-fit testing, and the rest are used for training
two GAN models: a model for smiling images, and a model for non-smiling
images. Generated samples and details of the trained models can be
found in Section \ref{sec:models_s_ns} (appendix). The two models
are trained once and fixed throughout.

{\small{}}
\begin{table}
\centering{}{\small{}\caption{Rejection rates of the proposed $\protect\relume$, $\protect\relmmd$,
KID and FID, in the GAN model comparison problem. ``FID diff.''
refers to the average of $\mathrm{FID}(P,R)-\mathrm{FID}(Q,R)$ estimated
in each trial. Significance level $\alpha=0.01$ (for $\protect\relume$,
$\protect\relmmd$, and KID). \label{tab:gan_test_powers}}
\hspace{-2mm}}{\small{}}%
\begin{tabular}{ccc>{\centering}p{0.3cm}ccc>{\centering}p{0.5cm}>{\centering}p{0.5cm}>{\centering}p{0.5cm}c}
\toprule 
 & {\small{}$P$} & {\small{}$Q$} & {\small{}$R$} & \multicolumn{3}{c}{{\small{}$\relume$}} & \multirow{2}{0.5cm}{{\small{}Rel-MMD}} & {\small{}KID} & {\small{}FID } & {\small{}FID diff.}\tabularnewline
 &  &  &  & {\small{}J10} & {\small{}J20} & {\small{}J40} &  &  &  & \tabularnewline
\midrule
1. & {\small{}S} & {\small{}S} & {\small{}RS} & {\small{}0.0} & {\small{}0.0} & {\small{}0.0} & {\small{}0.0} & {\small{}0.0} & {\small{}0.53} & {\small{}-0.045 $\pm$ 0.52}\tabularnewline
2. & {\small{}RS} & {\small{}RS} & {\small{}RS} & {\small{}0.0} & {\small{}0.0} & {\small{}0.0} & {\small{}0.03} & {\small{}0.02} & {\small{}0.7} & {\small{}0.04 $\pm$ 0.19}\tabularnewline
3. & {\small{}S} & {\small{}N} & {\small{}RS} & {\small{}0.0} & {\small{}0.0} & {\small{}0.0} & {\small{}0.0} & {\small{}0.0} & {\small{}0.0} & {\small{}-15.22 $\pm$ 0.83}\tabularnewline
4. & {\small{}S} & {\small{}N} & {\small{}RN} & {\small{}0.57} & {\small{}0.97} & {\small{}1.0} & {\small{}1.0} & {\small{}1.0} & {\small{}1.0} & {\small{}5.25 $\pm$ 0.75}\tabularnewline
5. & {\small{}S} & {\small{}N} & {\small{}RM} & {\small{}0.0} & {\small{}0.0} & {\small{}0.0} & {\small{}0.0} & {\small{}0.0} & {\small{}0.0} & {\small{}-4.55 $\pm$ 0.82}\tabularnewline
\bottomrule
\end{tabular}{\small{}\vspace{-4mm}}
\end{table}
{\small \par}

In addition to $\relmmd$, we compare the proposed $\relume$ to Kernel
Inception Distance (KID) \citep{BinSutArbGre2018}, and Fr\'{e}chet
Inception Distance (FID) \citep{HeuRamUntNesHoc2017}, which are distances
between two samples (originally proposed for comparing a sample of
generated images, and a reference sample). All images are represented
by 2048 features extracted from the Inception-v3 network \citep{SzeVanIofShlWoj2016}
at the pool3 layer following \citet{BinSutArbGre2018}. When adapted
for three samples, KID is in fact a variant of $\relmmd$ in which
a third-order polynomial kernel is used instead of a Gaussian kernel
(on top of the pool3 features). Following \citet{BinSutArbGre2018},
we construct a bootstrap estimator for FID (10 subsamples with 1000
points in each). For the proposed $\relume$, the $J\in\{10,20,40\}$
test locations are randomly set to contain $J/2$ smiling images,
and $J/2$ non-smiling images drawn from a held-out set of real images.
We create problem variations by setting $P,Q,R\in\{$S, N, RS, RN,
RM$\}$ where S denotes generated smiling images (from the trained
model), N denotes generated non-smiling images, M denotes an equal
mixture of smiling and non-smiling images, and the prefix R indicates
that real images are used (as opposed to generated ones). The sample
size is $n=2000$, and each problem variation is repeated for 10 trials
for FID (due to its high complexity) and 100 trials for other methods.
The rejection rates from all the methods are shown in Table \ref{tab:gan_test_powers}.
Here, the test result for FID in each trial is considered ``reject
$H_{0}$'' if $\mathrm{FID}(P,R)>\mathrm{FID}(Q,R)$. \citet{HeuRamUntNesHoc2017}
did not propose FID as a statistical test. That said, there is a generic
way of constructing a relative goodness-of-fit test based on repeated
permutation of samples of $P$ and $Q$ to simulate from the null
distribution. However, FID requires computing the square root of the
feature covariance matrix (2048 x 2048), and is computationally too
expensive for permutation testing.

Overall, we observe that the proposed test does at least equally well
as existing approaches, in identifying the better model in each case.
In problems 1 and 2, $P$ and $Q$ have the same goodness of fit,
by design. In these cases, all the tests correctly yield low rejection
rates, staying roughly at the design level ($\alpha=0.01$). Without
a properly chosen threshold, the (false) rejection rates of FID fluctuate
around the expected value of 0.5. This means that simply comparing
FIDs (or other distances) to the reference sample without a calibrated
threshold can lead to a wrong conclusion on the relative goodness
of fit. The FID is further complicated by the fact that its estimator
suffers from bias in ways that are hard to model and correct for (see
\citet[Section D.1]{BinSutArbGre2018}). Problem 4 is a case where
the model $Q$ is better. We notice that increasing the number of
test locations of $\relume$ helps detect the better fit of $Q$.
In problem 5, the reference sample is bimodal, and each model can
capture only one of the two modes (analogous to the synthetic problem
in Figure \ref{fig:mmd_wit_isomix}). All the tests correctly indicate
that no model is better than another.

\textbf{5. \exseganfeature{}} 
\begin{figure}
\hspace{-2mm}\subfloat[Sample from $P=$ LSGAN trained for 15 epochs.\label{fig:lsgan15_sample}]{\includegraphics[width=0.25\textwidth]{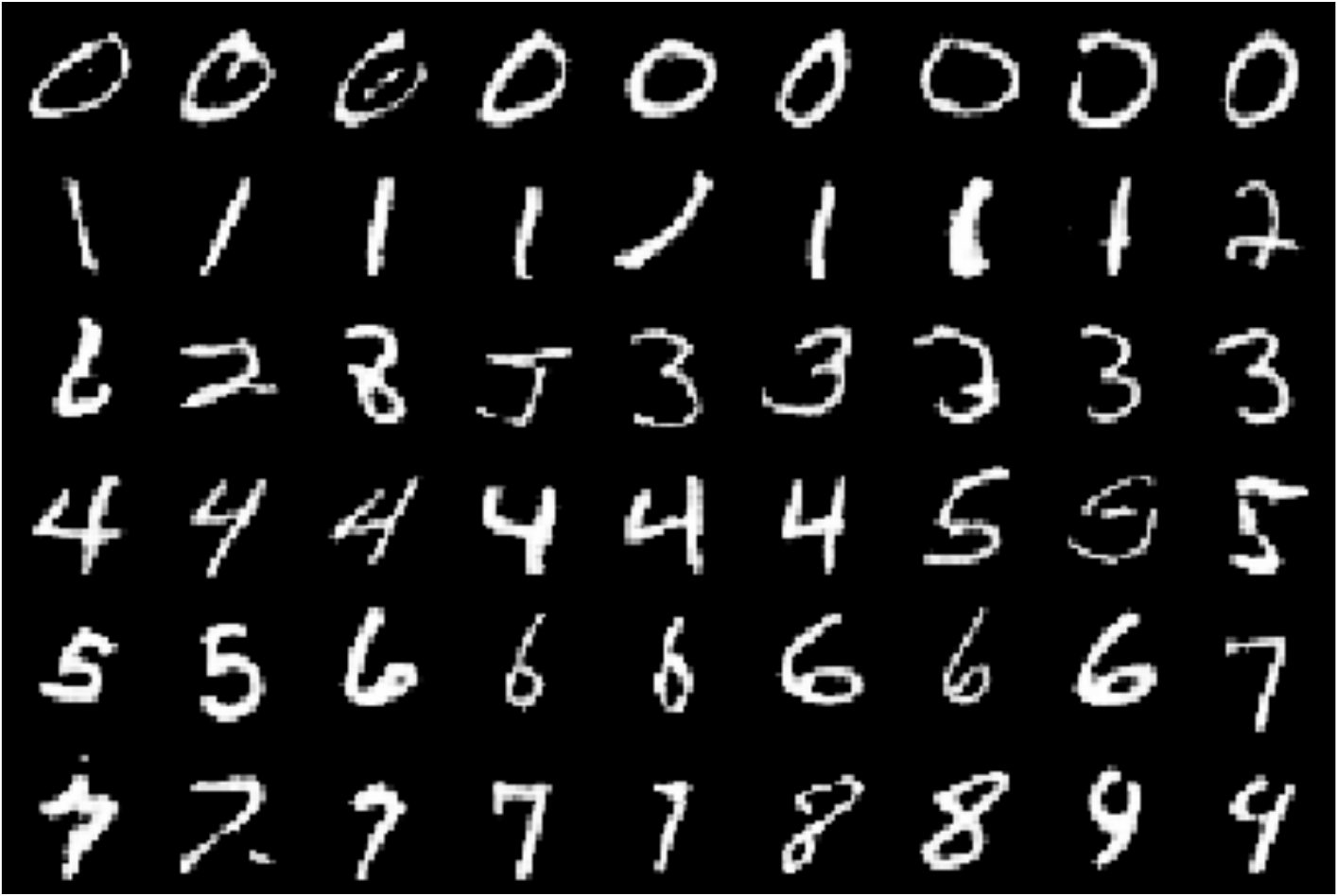}}\,\,\,\,\subfloat[Sample from $Q=$ LSGAN trained for 17 epochs.\label{fig:lsgan17_sample}]{\includegraphics[width=0.25\textwidth]{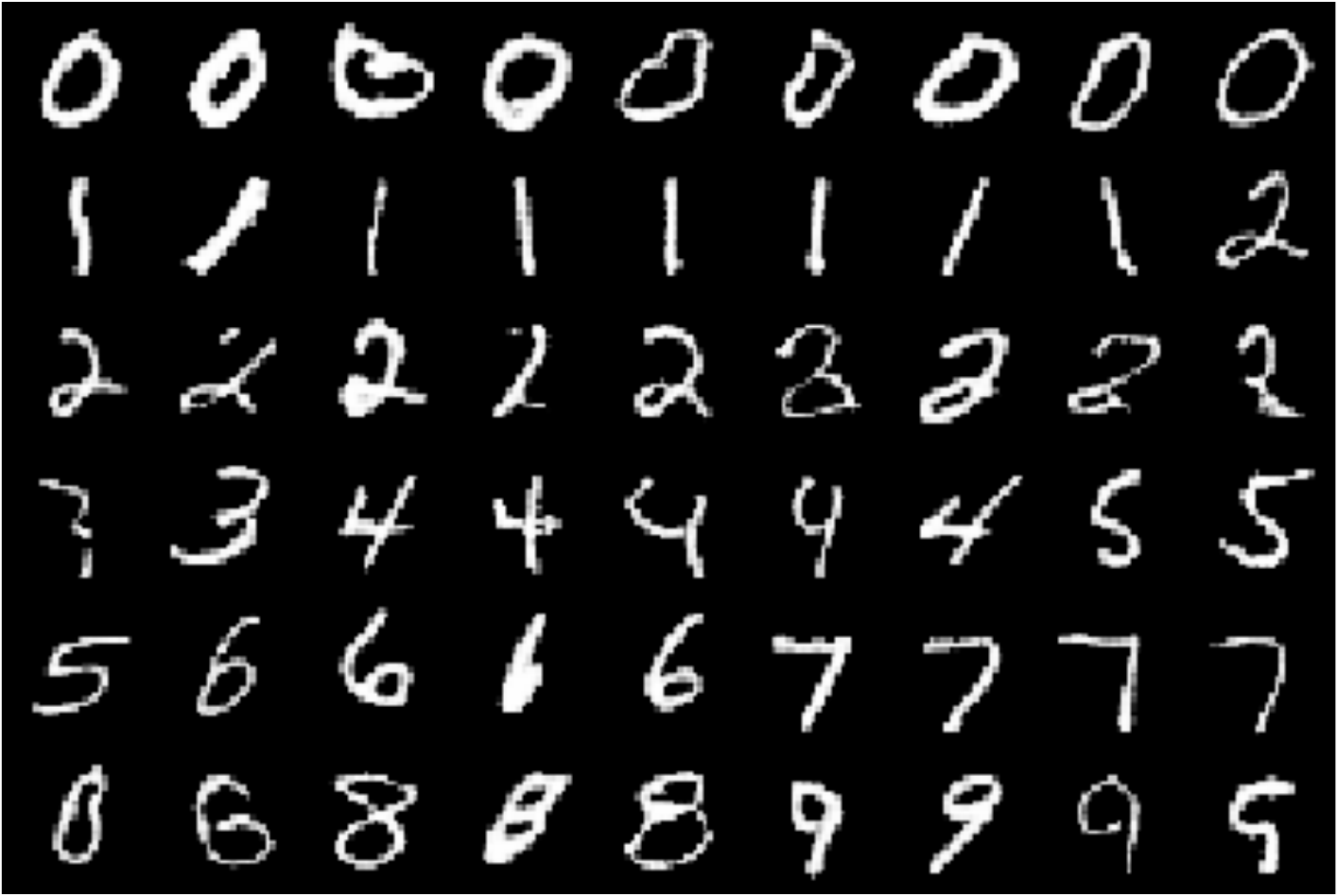}}\,\subfloat[Power criterion\label{fig:lsgan15_vs_17_box}]{\includegraphics[width=0.26\textwidth]{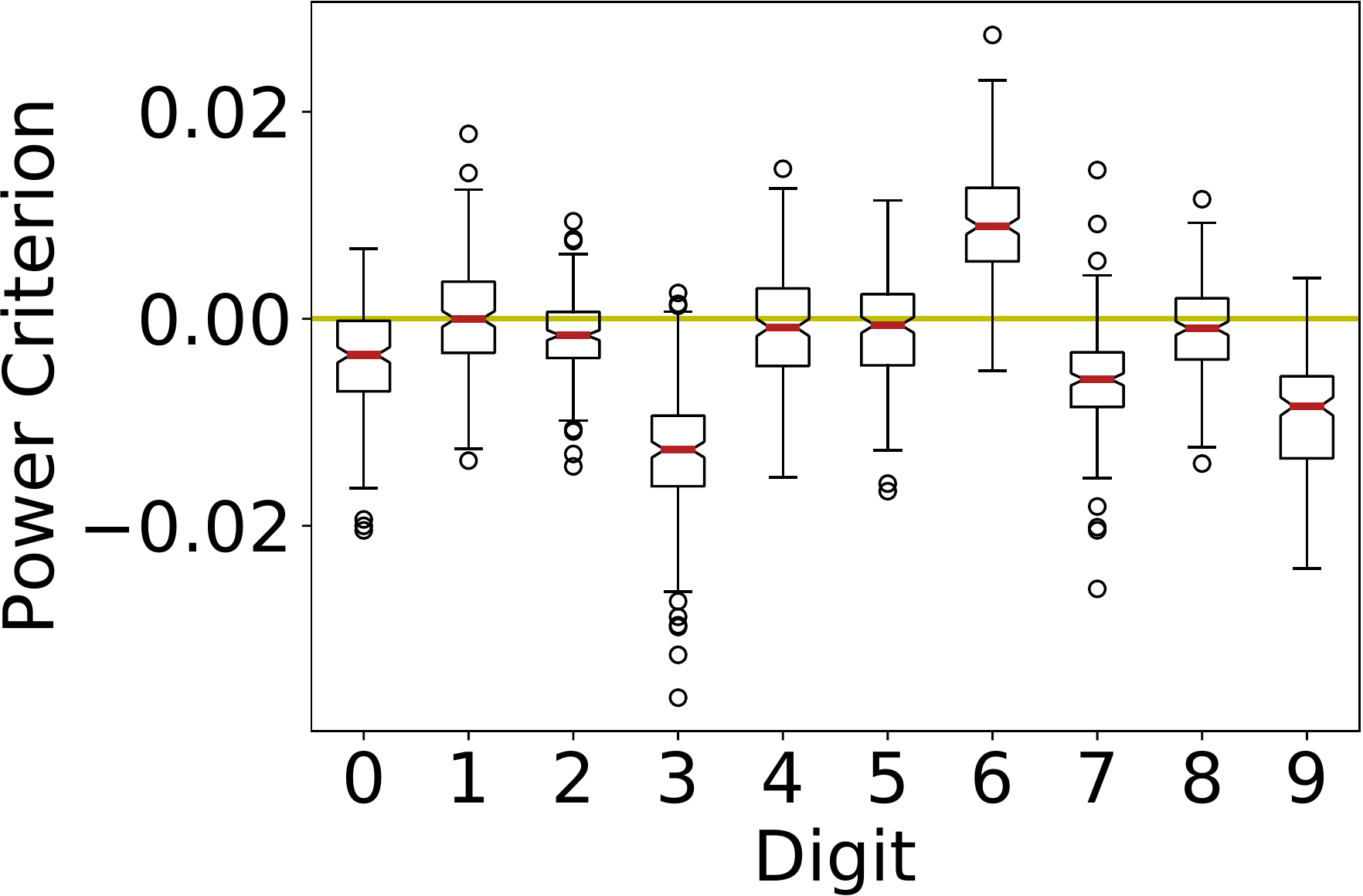}}\subfloat[Greedily optimized test locations\label{fig:lsgan15_vs_17_discopt}]{

\begin{tikzpicture}[scale=0.5, inner sep=0]
\node[anchor=south east,inner sep=0] at (-0.1, 1) {Min:};
\node[anchor=south east,inner sep=0] at (-0.1, 3.3) {Max:};
\node[anchor=south west,inner sep=0] at (0,2.4) {
\includegraphics[width=0.17\textwidth,trim={0 180 0 5},clip]{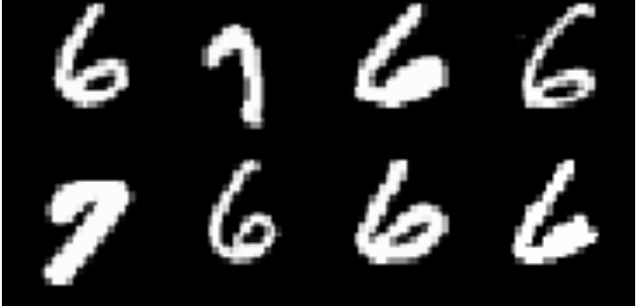}%
};
\node[anchor=south west,inner sep=0] at (0,0) {
\includegraphics[width=0.17\textwidth,trim={0 180 0 5},clip]{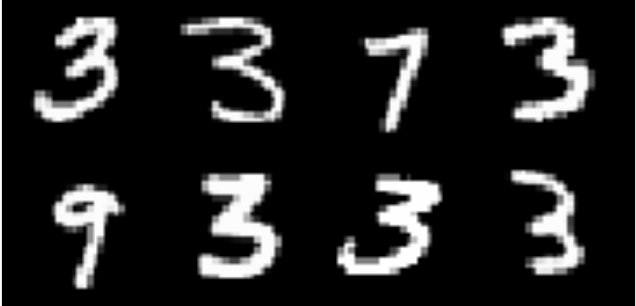}%
};     
\end{tikzpicture}

}\caption{Examining the training of an LSGAN model with $\protect\relume$.
(a), (b) Samples from the two models $P,Q$ trained on MNIST. (c)
Distributions of power criterion values computed over 200 trials.
Each distribution is formed by randomly selecting $J=40$ test locations
from real images of a digit type. (d) Test locations showing where
$Q$ is better (maximization of the power criterion), and test locations
showing where $P$ is better (minimization). \vspace{-7mm}}
\end{figure}
In the final experiment, we show that the power criterion of $\relume$
can be used to examine the relative change of the distribution of
a GAN model after training further for a few epochs. To illustrate,
we consider training an LSGAN model \citep{MaoLiXieLauWan2017} on
MNIST, a dataset in which each data point is an image of a handwritten
digit. We set $P$ and $Q$ to be LSGAN models after 15 epochs and
17 epochs of training, respectively. Details regarding the network
architecture, training, and the kernel (chosen to be a Gaussian kernel
on features extracted from a convolutional network) can be found in
Section \ref{sec:lsgan_mnist_details}. Samples from $P$ and $Q$
are shown in Figures \ref{fig:lsgan15_sample} and \ref{fig:lsgan17_sample}
(see Figure \ref{fig:lsgan_full_samples} in the appendix for more
samples). 

We set the test locations $V$ to be the set $V_{i}$ containing $J=40$
randomly selected real images of digit $i$, for $i\in\{0,\ldots,9\}$.
We then draw $n=2000$ points from $P,Q$ and the real data $(R)$,
and use $V=V_{i}$ to compute the power criterion for $i\in\{0,\ldots,9\}$.
The procedure is repeated for 200 trials where $V$ and the samples
are redrawn each time. The results are shown in Figure \ref{fig:lsgan15_vs_17_box}.
We observe that when $V=V_{3}$ (i.e., box plot at the digit 3) or
$V_{9}$, the power criterion values are mostly negative, indicating
that $P$ is better than $Q$, as measured in the regions indicated
by real images of the digits 3 or 9. By contrast, when $V=V_{6}$,
the large mass of the box plot in the positive orthant shows that
$Q$ is better in the regions of the digit 6. For other digits, the
criterion values spread around zero, showing that there is no difference
between $P$ and $Q$, on average. We further confirm that the class
proportions of the generated digits from both models are roughly correct
(i.e., uniform distribution), meaning that the difference between
$P$ and $Q$ in these cases is not due to the mismatch in class proportions
(see Section \ref{sec:lsgan_mnist_details}). These observations imply
that after the 15th epoch, training this particular LSGAN model two
epochs further improves generation of the digit 6, and degrades generation
of digits 3 and 9. A non-monotonic improvement during training is
not uncommon since at the 15th epoch the training has not converged.
More experimental results from comparing different GAN variants on
MNIST can be found in Section \ref{sec:mnist_gan_compare} in the
appendix.

We note that the set $V$ does not need to contain test locations
of the same digit. In fact, the notion of class labels may not even
exist in general. It is up to the user to define $V$ to contain examples
which capture the relevant concept of interest. For instance, to compare
the ability of models to generate straight strokes, one might include
digits 1 and 7 in the set $V$. An alternative to manual specification
of $V$ is to optimize the power criterion to find the locations that
best distinguish the two models (as done in experiment 2). To illustrate,
we consider greedily optimizing the power criterion by iteratively
selecting a test location (from real images) which best improves the
objective. Maximizing the objective yields locations that indicate
the better fit of $Q$, whereas minimization gives locations which
show the better fit of $P$ (recall from Figure \ref{fig:witness_illustration}).
The optimized locations are shown in Figure \ref{fig:lsgan15_vs_17_discopt}.
The results largely agree with our previous observations, and do not
require manually specifying $V$. This optimization procedure is applicable
to any models which can be sampled. 


\subsubsection*{Acknowledgments}

HK and AG thank the Gatsby Charitable Foundation for the financial
support.


\bibliographystyle{abbrvnat}
\bibliography{kmod_nips2018}


\clearpage
\newpage
\appendix

\begin{center}
{\LARGE{}\ourtitle{}}
\par\end{center}{\LARGE \par}

\begin{center}
\textcolor{black}{\Large{}Supplementary}
\par\end{center}{\Large \par}

\section{Optimization of Test Locations in $\protect\relume$ and $\protect\relfssd$}

\label{sec:test_locs_opt} This section describes the optimization
procedure we use to select the test locations $V$ and the bandwidth
of the Gaussian kernel in the experiment ``\exsepowtoy{}.'' Since
the two sets $V,W$ of test locations are constrained to be the same
i.e., $V=W$ consisting of $J=J_{p}=J_{q}$ locations, in total, we
have $Jd+1$ parameters. We follow a similar implementation of the
optimization procedure for finding the test locations in FSSD.\footnote{Code for FSSD released by the authors: \url{https://github.com/wittawatj/kernel-gof}.}
All the parameters are optimized jointly by gradient ascent. We initialize
the test locations by randomly picking $J$ points from the training
set. The Gaussian width is initialized (for gradient ascent) to the
square of the mean of $\mathrm{med}_{X^{tr}\cup Z^{tr}}$ and $\mathrm{med}_{Y^{tr}\cup Z^{tr}}$,
where $\mathrm{med}_{A}:=\mathrm{median}\left(\left\{ \|\boldsymbol{x}-\boldsymbol{x}'\|_{2}\right\} _{\boldsymbol{x},\boldsymbol{x}'\in A}\right)$.
This is a similar heuristic used in \citet{BouBelBlaAntGre2015} to
set the bandwidth of the Gaussian kernel for $\relmmd$. 

\section{Trained Models for Generating Smiling and Non-Smiling Images}

\label{sec:models_s_ns}
\begin{figure}[H]
\begin{centering}
\subfloat[Samples from the smiling model]{\includegraphics[width=0.45\textwidth]{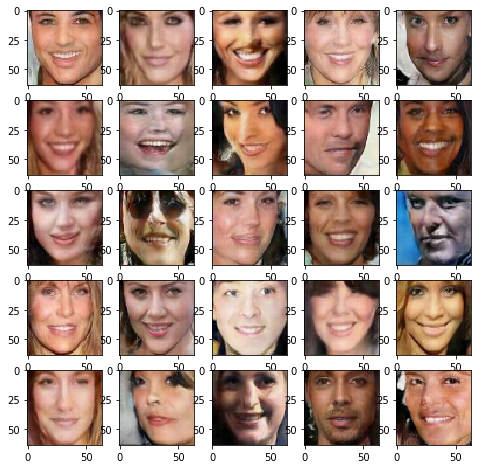}}\hspace{8mm}\subfloat[Samples from the non-smiling model]{\includegraphics[width=0.45\textwidth]{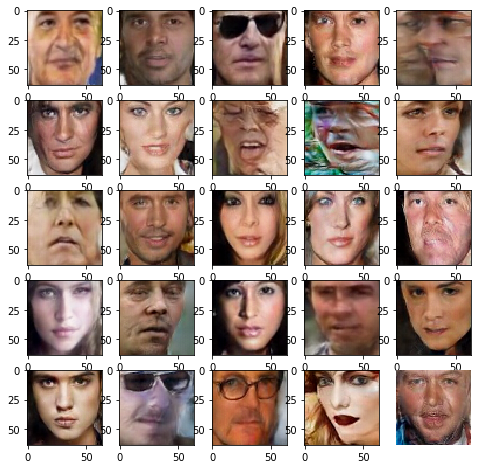}}
\par\end{centering}
\caption{Samples from the two trained models (smiling, and non-smiling) used
in "\exseganrej{}" experiment in Section \ref{sec:experiments}.
\label{fig:samples_s_ns_models}}
\end{figure}
This section describes the details of the two GAN models (smiling,
and non-smiling models) we use in the "\exseganrej{}" experiment
in Section \ref{sec:experiments}. We use the CelebA dataset \citep{LiuLuoWanTan2015}
in which each data point is an image of a celebrity with 40 binary
attributes annotated e.g., pointy nose, smiling, mustache, etc. We
create a partition of the images on the \emph{smiling} attribute,
thereby creating two disjoint subsets of\emph{ smiling} and \emph{non-smiling}
images. To reduce confounding factors that are not related to smiling
(e.g., sunglasses, background), each image is cropped to be 64x64
pixels, so that only the face remains. Cropping and image alignment
with eyes and lips are done with the software described in \citet{AmoLudSat2016}.
We use DCGAN architecture \citep{RadMetChi2015} (for both generator
and discriminator) for both smiling and non-smiling models, coded
in Pytorch. Subsampling was performed so that the training sizes for
the two models are equal. Each model is trained on 84,822 images (i.e.,
84822 smiling faces, and 84822 non-smiling faces) for 50 epochs. The
training time was roughly three hours using an Nvidia Titan X graphics
card with Pascal architecture. We use Adam optimizer \citep{KinBa2014}
with $\beta_{1}=0.5$ and $\beta_{2}=0.999$. The learning rate is
set to $10^{-3}$ (for both discriminator and generator in the two
models). Some samples generated from the two trained models are shown
in Figure \ref{fig:samples_s_ns_models}. 

\section{Proofs}

This section contains proofs for the results given in the main text.

\subsection{Proof of Theorem \ref{thm:normal_sume}}

\label{sec:proof_normal_sume}Let all the notations be defined as
in Section \ref{sec:new_mctests}. Recall Theorem \ref{thm:normal_sume}:
\normalsume*
\begin{proof}
Consider a random vector $\boldsymbol{t}:=(\boldsymbol{x},\boldsymbol{y},\boldsymbol{z})\in\mathcal{X}^{3}$,
where $\boldsymbol{x},\boldsymbol{y},$ and $\boldsymbol{z}$ are
independently drawn from $P,Q,$ and $R$, respectively. Let $T$
be the distribution of $\boldsymbol{t}$, and $\{\boldsymbol{t}_{i}\}_{i=1}^{n}=\{(\boldsymbol{x}_{i},\boldsymbol{y}_{i},\boldsymbol{z}_{i})\}_{i=1}^{n}\stackrel{i.i.d.}{\sim}T$.
Define two functions 
\begin{align*}
\delta_{V}^{P}(\boldsymbol{t},\boldsymbol{t}') & :=(\psi_{V}(\boldsymbol{x})-\psi_{V}(\boldsymbol{z}))^{\top}(\psi_{V}(\boldsymbol{x}')-\psi_{V}(\boldsymbol{z}')),\\
\delta_{W}^{Q}(\boldsymbol{t},\boldsymbol{t}') & :=(\psi_{W}(\boldsymbol{y})-\psi_{W}(\boldsymbol{z}))^{\top}(\psi_{W}(\boldsymbol{y}')-\psi_{W}(\boldsymbol{z}')),
\end{align*}
 where $\boldsymbol{t}':=(\boldsymbol{x}',\boldsymbol{y}',\boldsymbol{z}')$.
It can be seen that $\delta_{V}^{P}(\boldsymbol{t},\boldsymbol{t}')=\delta_{V}^{P}(\boldsymbol{t}',\boldsymbol{t})$
and $\delta_{W}^{Q}(\boldsymbol{t},\boldsymbol{t}')=\delta_{W}^{Q}(\boldsymbol{t}',\boldsymbol{t})$
for all $\boldsymbol{t},\boldsymbol{t}'\in\mathcal{X}^{3}$, and that
both functions are valid U-statistic kernels. It is not difficult
to see that $\umehpr$ and $\umehqr$ (estimator given in Section
\ref{sec:ume}) can be written in the form of second-order U-statistics
\citep[Chapter 5]{Ser2009} as
\begin{align*}
\umehpr & =\binom{n}{2}^{-1}\sum_{i=1}^{n}\sum_{j<i}\delta_{V}^{P}(\boldsymbol{t},\boldsymbol{t}'),\\
\umehqr & =\binom{n}{2}^{-1}\sum_{i=1}^{n}\sum_{j<i}\delta_{W}^{Q}(\boldsymbol{t},\boldsymbol{t}').
\end{align*}
Since $\psipv\neq\psirv$ (because $\umepr>0$), $\umehpr$ is a non-degenerate
U-statistic. Since $\psiqw\neq\psirw$, $\umehqr$ is also non-degenerate
\citep[Section 5.5.1]{Ser2009}. By \citet[Theorem 7.1]{Hoe1948},
asymptotically their joint distribution is given by a normal distribution:
\begin{equation}
\sqrt{n}\left(\left(\begin{array}{c}
\umehpr\\
\umehqr
\end{array}\right)-\left(\begin{array}{c}
\umepr\\
\umeqr
\end{array}\right)\right)\stackrel{d}{\to}\mathcal{N}\left(\boldsymbol{0},4\left(\begin{array}{cc}
\zeta_{P}^{2} & \zeta_{PQ}\\
\zeta_{PQ} & \zeta_{Q}^{2}
\end{array}\right)\right),\label{eq:joint_ume}
\end{equation}
where
\begin{align*}
\zeta_{P}^{2} & =\mathbb{V}_{\boldsymbol{t}\sim T}\left[\mathbb{E}_{\boldsymbol{t}'\sim T}[\delta_{V}^{P}(\boldsymbol{t},\boldsymbol{t}')]\right]\stackrel{(a)}{=}(\psipv-\psirv)^{\top}(\covpv+\covrv)(\psipv-\psirv),\\
\zeta_{Q}^{2} & =\mathbb{V}_{\boldsymbol{t}\sim T}\left[\mathbb{E}_{\boldsymbol{t}'\sim T}[\delta_{W}^{Q}(\boldsymbol{t},\boldsymbol{t}')]\right]\stackrel{(b)}{=}(\psiqw-\psirw)^{\top}(\covqw+\covrw)(\psiqw-\psirw),\\
\zeta_{PQ} & =\mathrm{cov}_{\boldsymbol{t}\sim T}\left(\mathbb{E}_{\boldsymbol{t}'\sim T}[\delta_{V}^{P}(\boldsymbol{t},\boldsymbol{t}')],\mathbb{E}_{\boldsymbol{t}'\sim T}[\delta_{W}^{Q}(\boldsymbol{t},\boldsymbol{t}')]\right)\stackrel{(c)}{=}(\psipv-\psirv)^{\top}C_{VW}^{R}(\psiqw-\psirw),
\end{align*}
and $C_{VW}^{R}:=\mathrm{cov}_{\boldsymbol{z}\sim R}[\psi_{V}(\boldsymbol{z}),\psi_{W}(\boldsymbol{z})]\in\mathbb{R}^{J_{p}\times J_{q}}.$
At $(a),(b),(c)$, we rely on the independence among $\boldsymbol{x},\boldsymbol{y},$
and $\boldsymbol{z}$. A direct calculation gives the expressions
of $\zeta_{P}^{2}$, $\zeta_{Q}^{2}$, and $\zeta_{PQ}$. By the continuous
mapping theorem, and (\ref{eq:joint_ume}), $\sqrt{n}\left(\begin{array}{c}
1\\
-1
\end{array}\right)^{\top}\left(\left(\begin{array}{c}
\umehpr\\
\umehqr
\end{array}\right)-\left(\begin{array}{c}
\umepr\\
\umeqr
\end{array}\right)\right)=\sqrt{n}\left(\widehat{S}_{n}^{U}-S^{U}\right)\stackrel{d}{\to}\mathcal{N}\left(\boldsymbol{0},4\left(\begin{array}{c}
1\\
-1
\end{array}\right)^{\top}\left(\begin{array}{cc}
\zeta_{P}^{2} & \zeta_{PQ}\\
\zeta_{PQ} & \zeta_{Q}^{2}
\end{array}\right)\left(\begin{array}{c}
1\\
-1
\end{array}\right)\right)$ giving the result.
\end{proof}
\emph{Remark 1}. The assumption that $P,Q,$ and $R$ are all distinct
in Theorem \ref{thm:normal_sume} is necessary for $\umehpr$ and
$\umehqr$ to follow a non-degenerate normal distribution asymptotically.
If $R\in\{P,Q\}$, then $\widehat{U_{S}^{2}}$ for $S\in\{P,Q\}$
asymptotically follows a weighted sum of chi-squared random variables,
and $U_{S}^{2}=0$. If $P=Q$, the covariance matrix in (\ref{eq:joint_ume})
is rank-defficient.

\section{Details of Experiment 5: \exseganfeature{}}

\label{sec:lsgan_mnist_details}

\paragraph{LSGAN Architecture}

We rely on Pytorch code\footnote{\url{https://github.com/znxlwm/pytorch-generative-model-collections}
(commit: 0d183bb5ea)} by Hyeonwoo Kang to train the LSGAN \citep{MaoLiXieLauWan2017} model
that we use in experiment 5. Network architectures of the generator
and the discriminator follow the design used in \citet[Section C.1]{CheDuaHouSchSut2016}.
We reproduce here in Table \ref{tab:lsgan_net_arch} for ease of reference.

\begin{table}
\begin{centering}
\caption{Discriminator and generator of LSGAN used in experiment 5.\label{tab:lsgan_net_arch}}
\begin{tabular}{ll}
\toprule 
Discriminator & Generator\tabularnewline
\midrule
\midrule 
Input: $28\times28$ grayscale image & Input noise vector $\boldsymbol{z}\sim\mathrm{Unif}[0,1]^{62}$ \tabularnewline
$4\times4$ conv. 64 LRELU. Stride 2. & FC. 1024 RELU. Batch norm.\tabularnewline
$4\times4$ conv. 128 LRELU. Stride 2. Batch norm. & FC. $7\times7\times128$ RELU. Batch norm.\tabularnewline
FC. 104 Leaky RELU. Batch norm. & $4\times4$ upconv. 64 RELU. Stride 2. Batch norm.\tabularnewline
FC & $4\times4$ upconv. 1 channel.\tabularnewline
\bottomrule
\end{tabular}
\par\end{centering}
\raggedright{}\vspace{2mm}conv. refers to a convolution layer, FC
means a fully-connected layer, RELU means a rectified linear unit,
LRELU means Leaky RELU, and upconv is the transposed convolution.
\end{table}

\paragraph{Kernel Function }

The kernel $k$ is chosen to be a Gaussian kernel on features extracted
from a convolutional neural network (CNN) classifier trained to classify
the ten digits of MNIST. Specifically the kernel $k$ is $k(\boldsymbol{x},\boldsymbol{y})=\exp\left(-\frac{\|f(\boldsymbol{x})-f(\boldsymbol{y})\|_{2}^{2}}{2\nu^{2}}\right)$,
where $f$ is the output (in $\mathbb{R}^{10}$) of the last fully-connected
layer of a trained CNN classifier.\footnote{Code to train the CNN classifier is taken from \url{https://github.com/pytorch/examples/blob/master/mnist/main.py}
(commit: 75e7c75).} The architecture of the CNN is
\begin{align*}
\text{Input: 28\ensuremath{\times}28 grayscale image} & \to5\times5\text{ conv. 10 filters. 2\ensuremath{\times}2\text{ max pool}}\\
 & \to5\times5\text{ conv. 20 filters. }\text{2\ensuremath{\times}2 \text{max pool}}\\
 & \to\text{FC. 50 RELU.}\\
 & \to\text{FC. }10\text{ outputs}.
\end{align*}
We train the CNN for 30 epochs and achieve higher than 99\% accuracy
on MNIST's test set. The Gaussian bandwidth $\nu$ is set with the
median heuristic.

\paragraph{Class Proportion of Generated Digits}

To examine the proportion of digits in the generated samples, we sample
4000 images from both models $P$ (LSGAN-15, LSGAN model trained for
15 epochs), and $Q$ (LSGAN-17, LSGAN model trained for 17 epochs),
and use the CNN classifier to assign a label to each image. The proportions
of digits are shown in Figure \ref{fig:lsgan_15vs17_class_prop}.
We observe that the generated digits from both LSGAN-15 and LSGAN-17
follow the right distribution i.e., uniform distribution, up to variability
due to noise. There is no mode collapse problem. This observation
means that the difference between $P$ and $Q$ studied in experiment
5 in the main text is not due to the mismatch of class proportions.

\begin{figure}
\centering{}\includegraphics[width=0.4\textwidth]{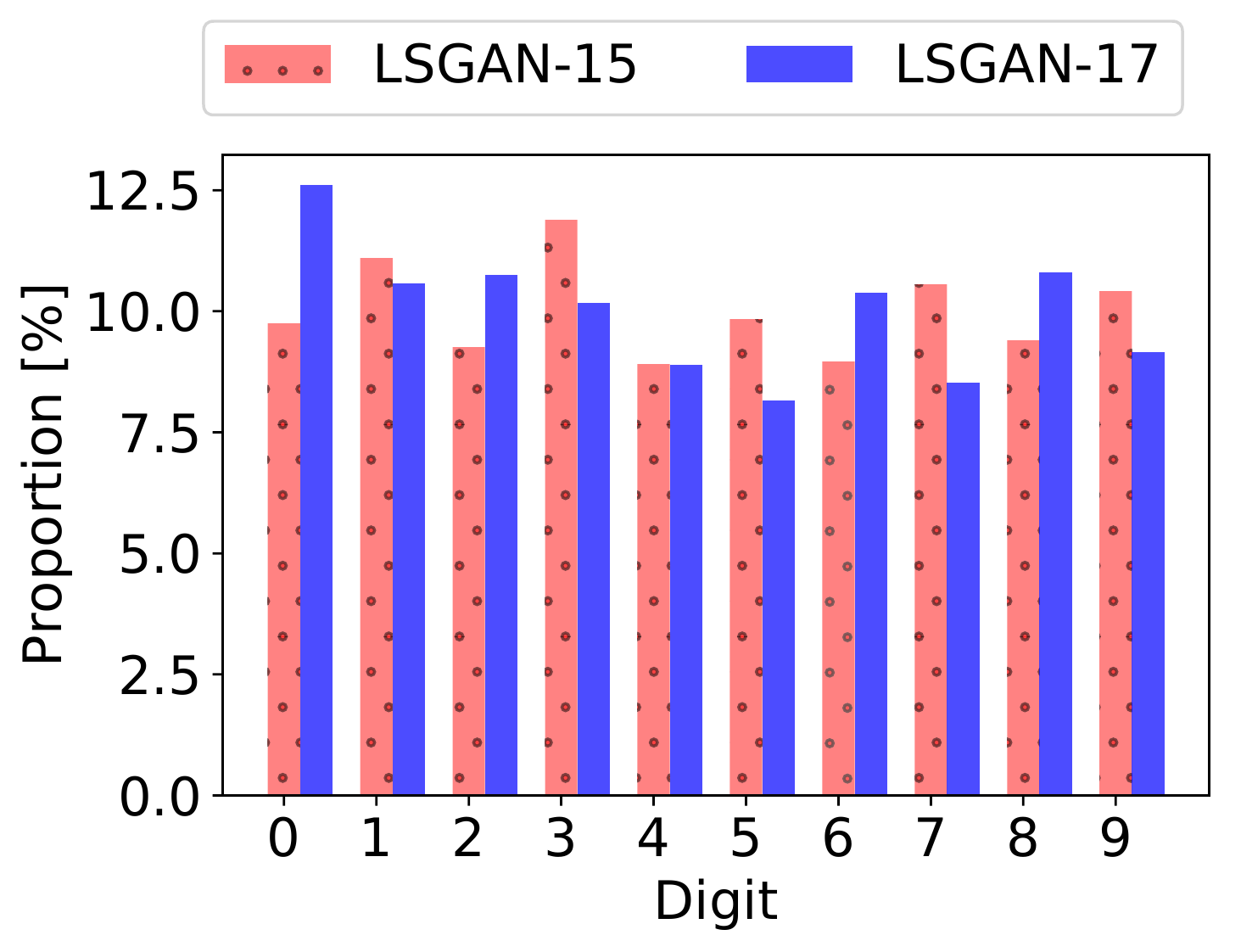}\caption{Proportions of generated digits from the LSGAN models at 15th and
17th epochs. Classification of each generated image is done by a trained
convolutional neural network classifier (see Section \ref{sec:lsgan_mnist_details}).\label{fig:lsgan_15vs17_class_prop}}
\end{figure}

\begin{figure}[H]
\begin{centering}
\subfloat[Samples from LSGAN trained for 15 epochs.]{\includegraphics[width=0.8\textwidth]{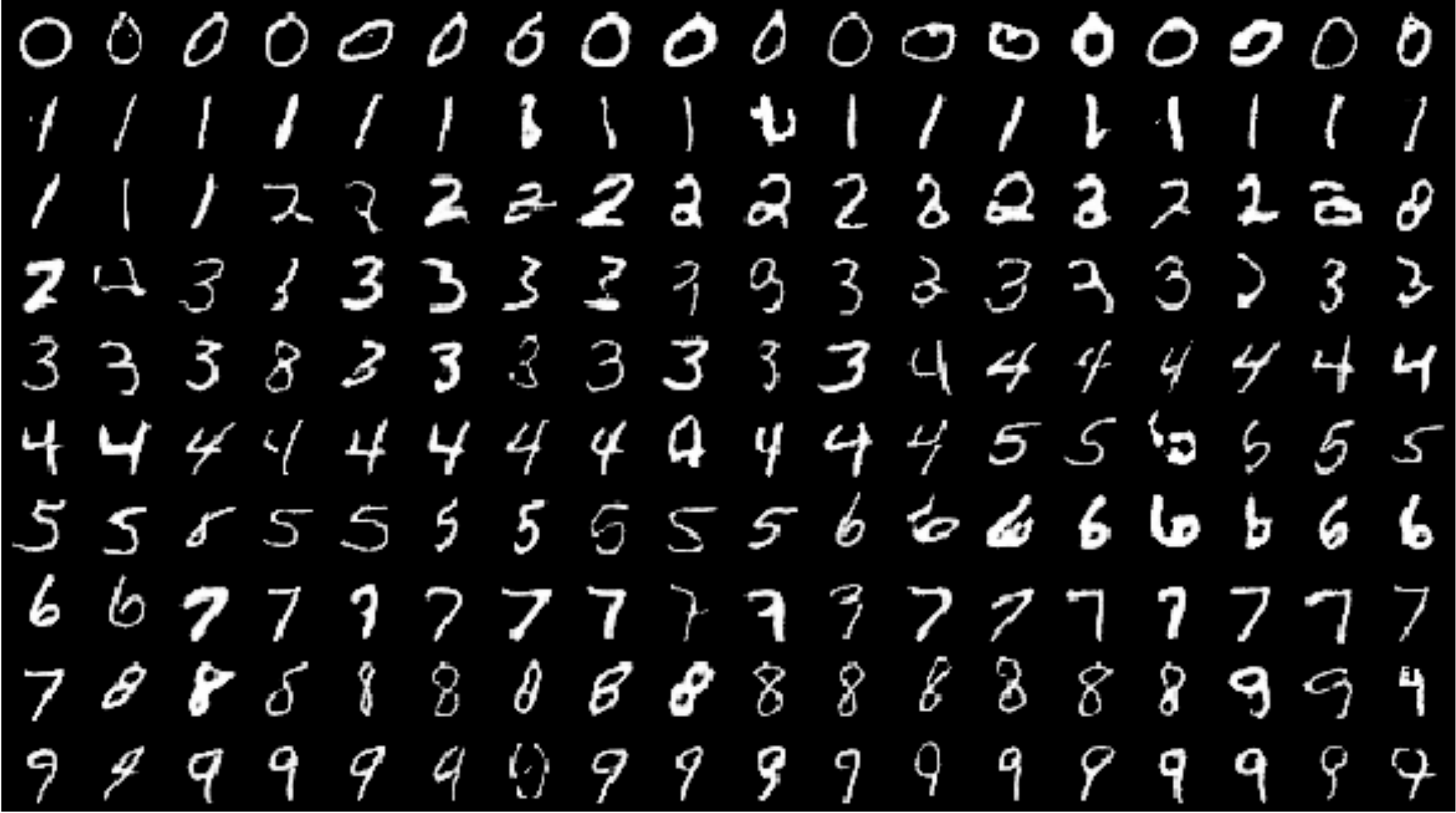}}\hspace{8mm}\subfloat[Samples from LSGAN trained for 17 epochs.]{\includegraphics[width=0.8\textwidth]{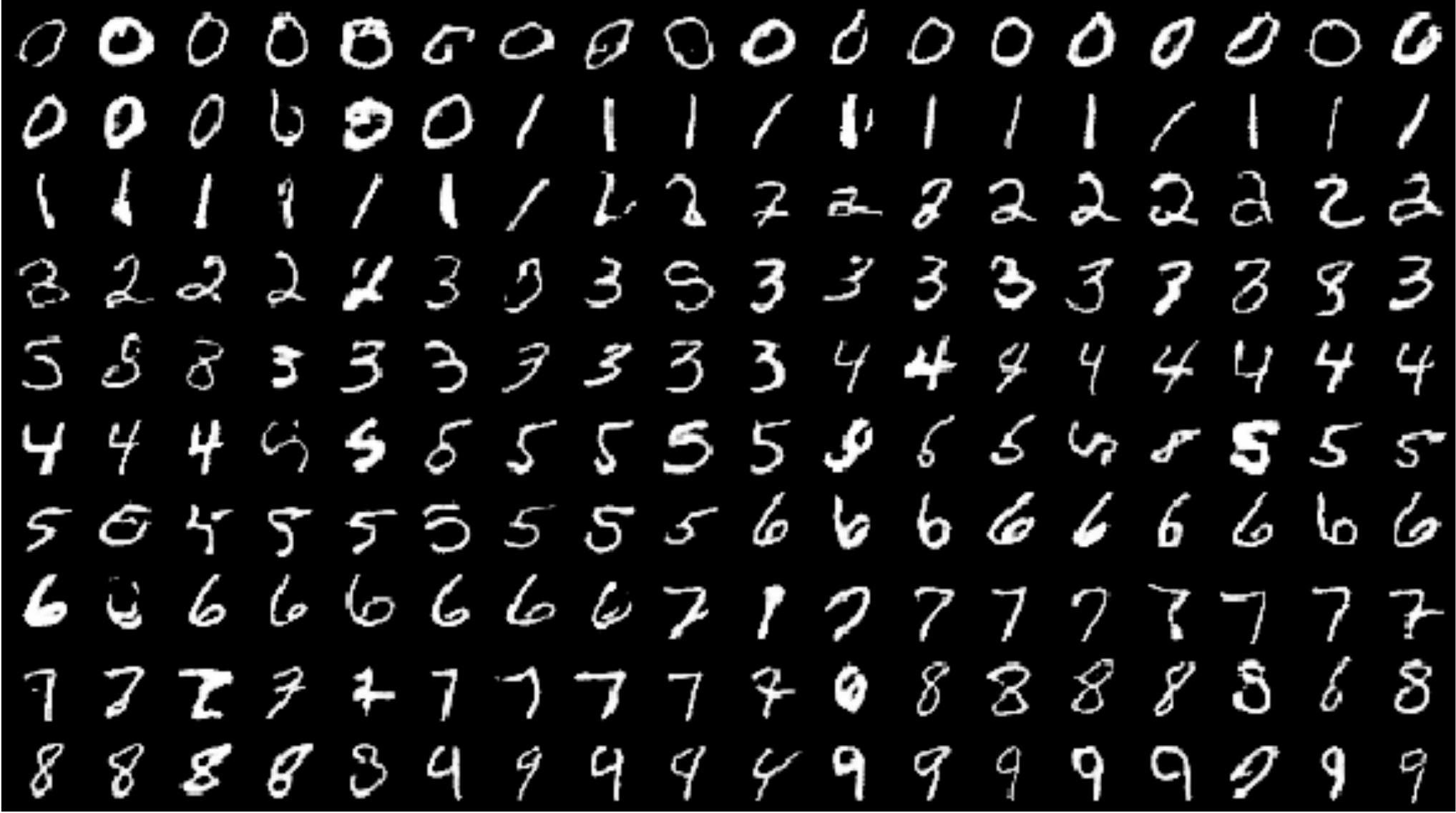}}
\par\end{centering}
\caption{Samples from LSGAN models trained on MNIST. Samples are taken from
the models at two different time points: after 15 epochs, and after
17 epochs of training. \label{fig:lsgan_full_samples}}
\end{figure}

\section{Comparing Different GAN Models Trained on MNIST}

\label{sec:mnist_gan_compare}This section extends experiment 5 in
the main text to compare other GAN variants trained on MNIST. All
the GAN variants that we consider have the same network architecture
as described in Table \ref{tab:lsgan_net_arch}. We use the notation
\emph{AAA-n} to refer to a GAN model of type AAA trained for $n$
epochs. We note that the result presented here for each GAN variant
does not represent its best achievable result. 

\paragraph{WGAN-GP-10 vs LSGAN-10}

Here we compare $P=$ Wasserstein GAN with Gradient Penalty \citep{GulAhmArjDumCou2017}
and $Q=$ LSGAN \citep{MaoLiXieLauWan2017} trained for ten epochs
on MNIST. The results are shown in Figure \ref{fig:wgangp10_vs_lsgan10}.
From the generated samples from the two models, it appears that LSGAN
yields more realistic images of handwritten digits, after training
for ten epochs. The positive power criterion values in Figure \ref{fig:wgangp10_vs_lsgan10_box}
further confirm this observation i.e., $Q$ is better at all digits. 

\begin{figure}[h]
\centering{}\vspace{-3mm}\subfloat[Sample from $P=$ WGAN-GP trained for 10 epochs.]{\includegraphics[width=0.32\textwidth]{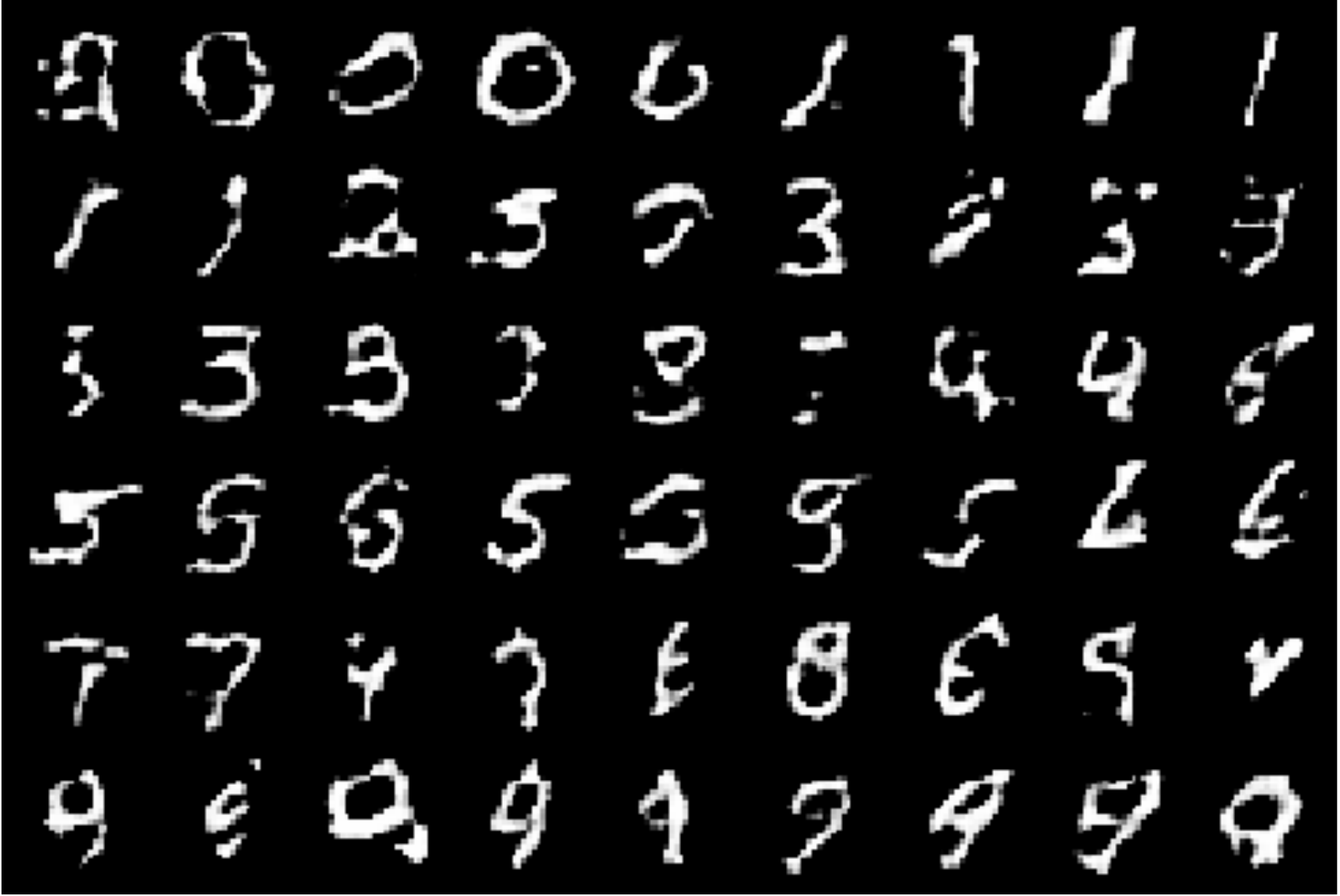}}\,\,\,\,\subfloat[Sample from $Q=$ LSGAN trained for 10 epochs.]{\includegraphics[width=0.32\textwidth]{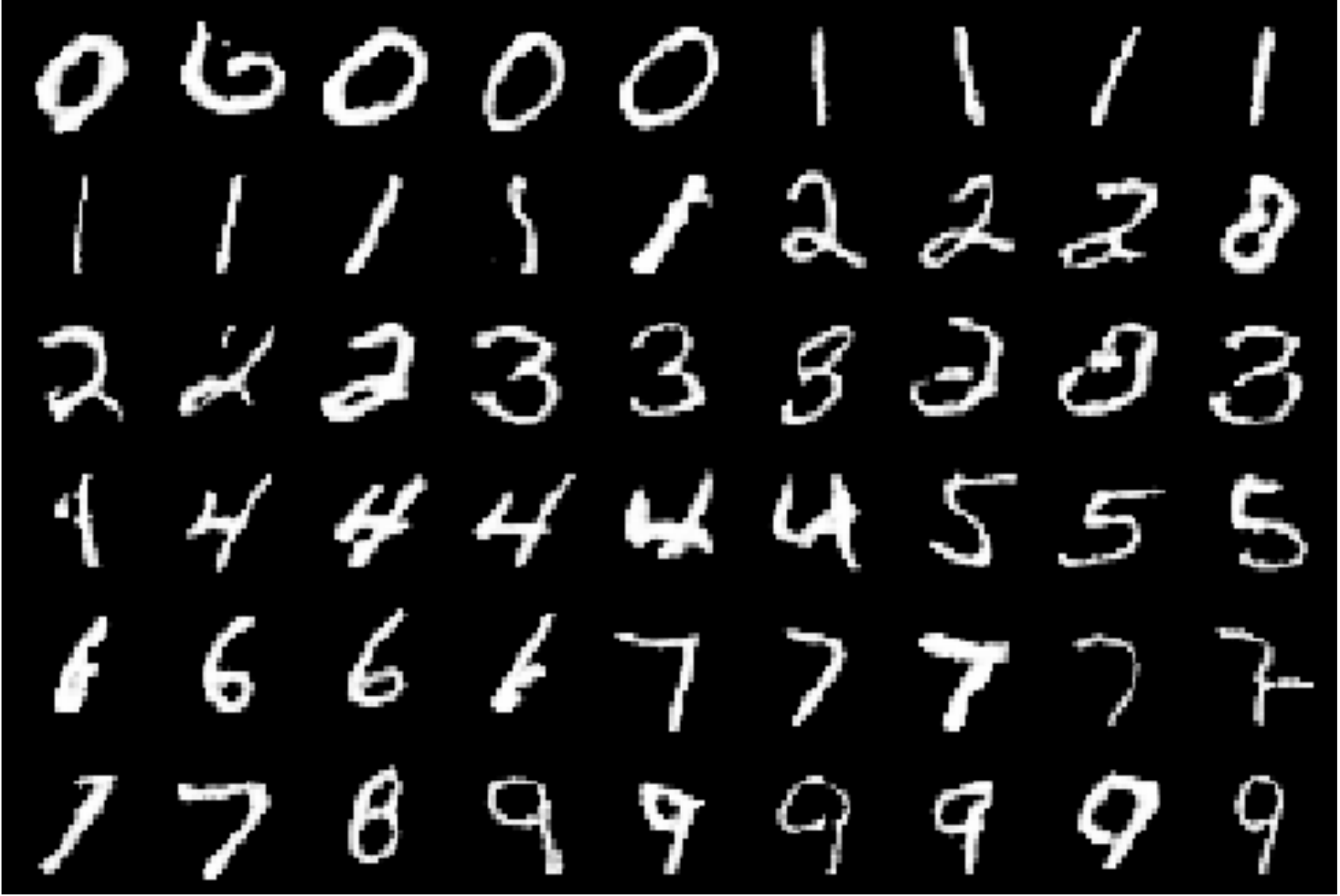}}\,\subfloat[Power criterion \label{fig:wgangp10_vs_lsgan10_box}]{\includegraphics[width=0.31\textwidth]{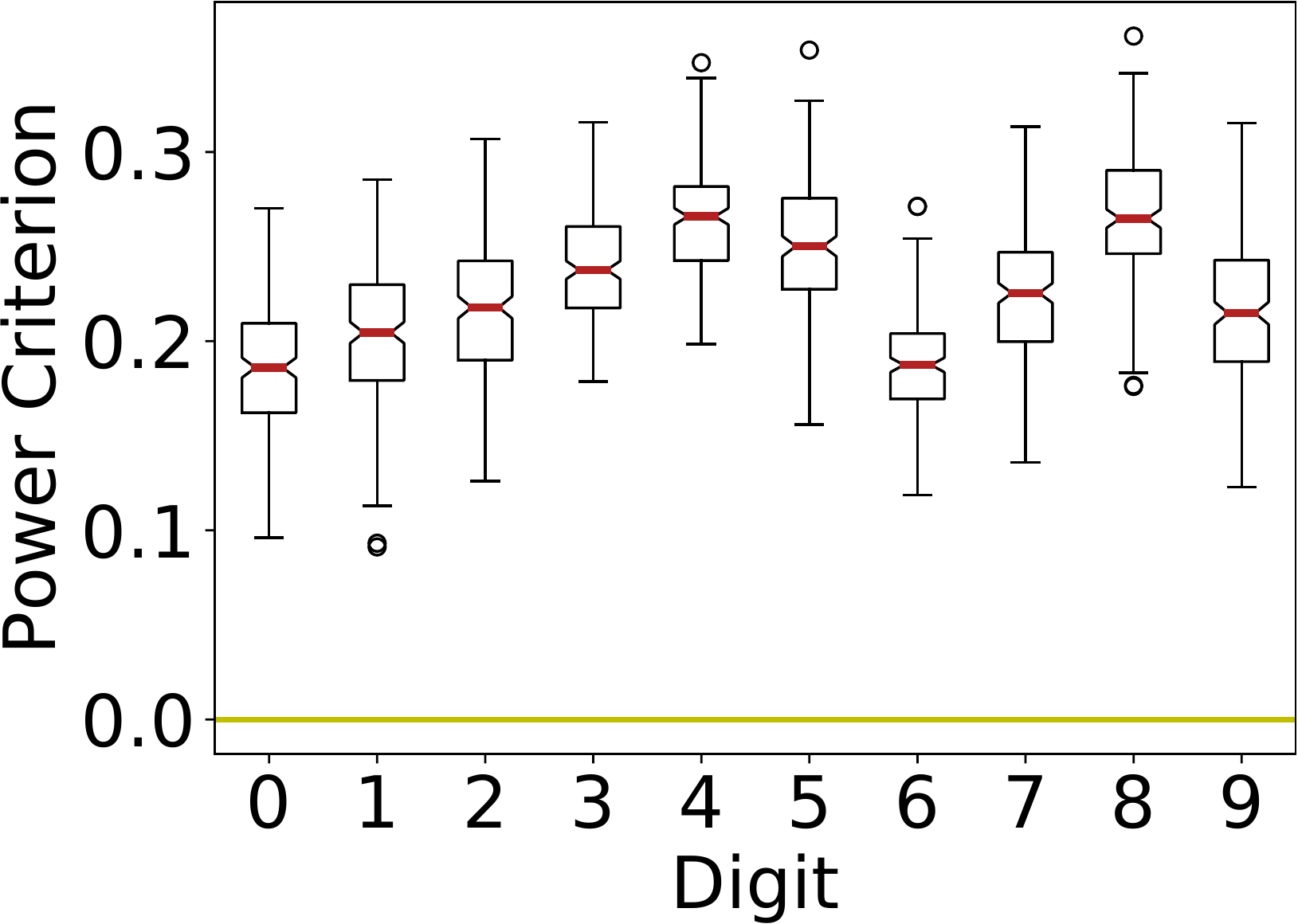}}\caption{Comparing WGAN-GP (Wasserstein GAN with Gradient Penalty) and LSGAN,
trained for ten epochs on MNIST. \label{fig:wgangp10_vs_lsgan10}\vspace{-3mm}}
\end{figure}

\paragraph{GAN-40 vs LSGAN-40}

In this part, we compare $P=$ GAN-40 \citep{GooPouMirXuWar2014}
and $Q=$ LSGAN trained for 40 epochs on MNIST. The results are shown
in Figure \ref{fig:gan40_vs_lsgan40}. It can be seen from visual
inspection that LSGAN-40 is slightly better overall, except for digits
1 and 5 at which LSGAN-40 appears to be significantly better. This
observation is also hinted by the power criterion values at digits
1 and 5 which tend to be positive (see Figure \ref{fig:gan40_vs_lsgan40_box}).

\begin{figure}[h]
\centering{}\vspace{-3mm}\subfloat[Sample from $P=$ GAN trained for 40 epochs.]{\includegraphics[width=0.32\textwidth]{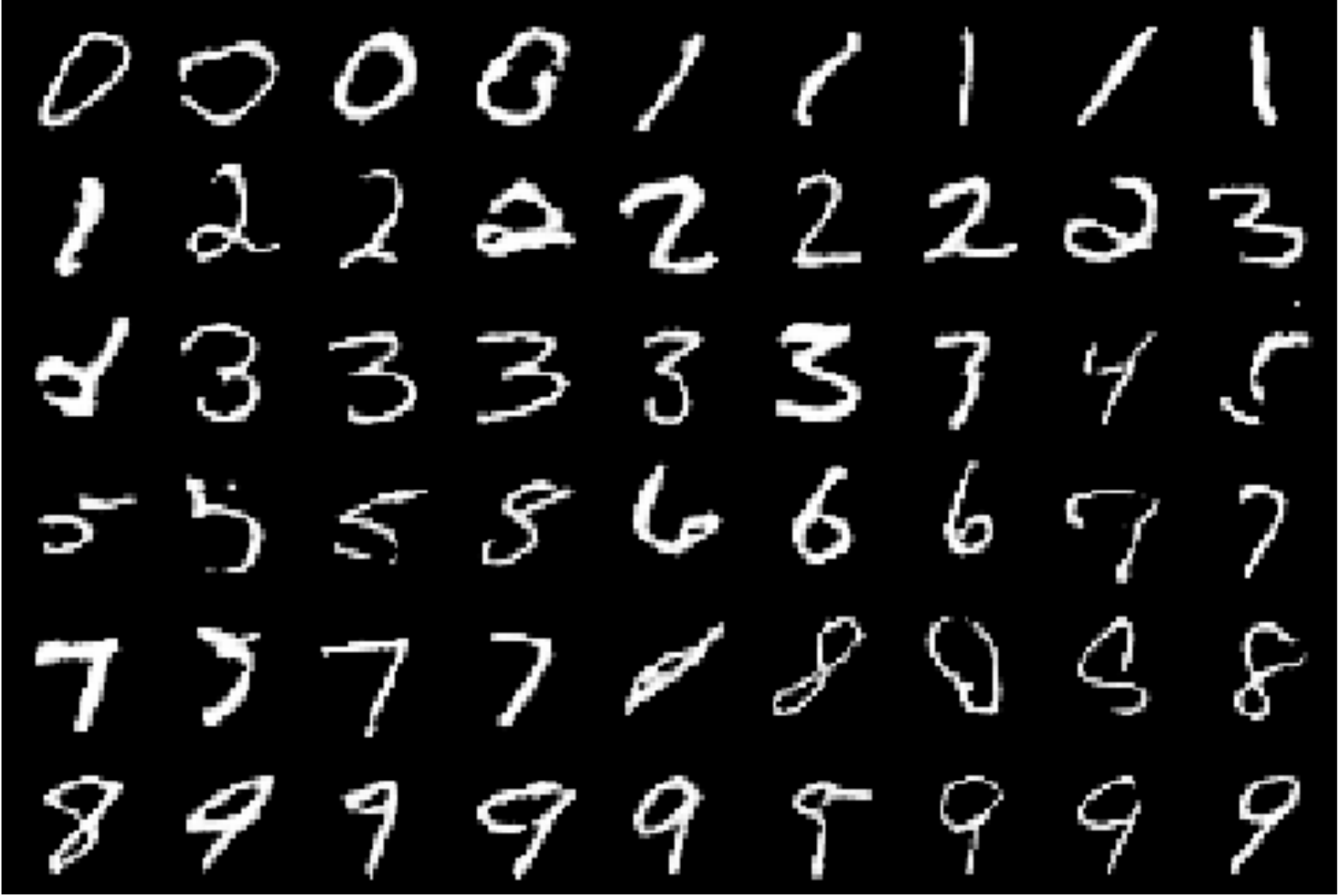}}\,\,\,\,\subfloat[Sample from $Q=$ LSGAN trained for 40 epochs.]{\includegraphics[width=0.32\textwidth]{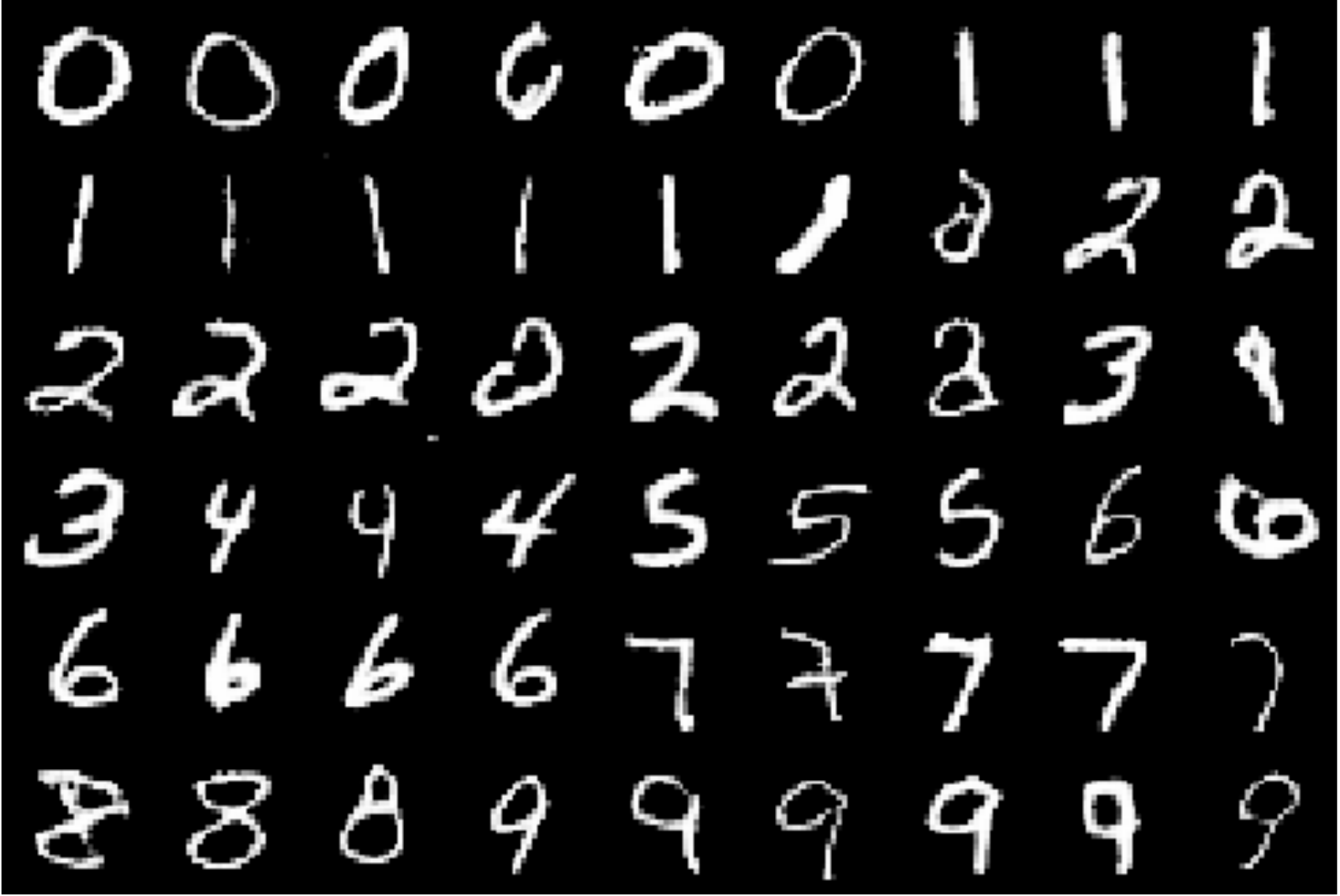}}\,\subfloat[Power criterion \label{fig:gan40_vs_lsgan40_box}]{\includegraphics[width=0.33\textwidth]{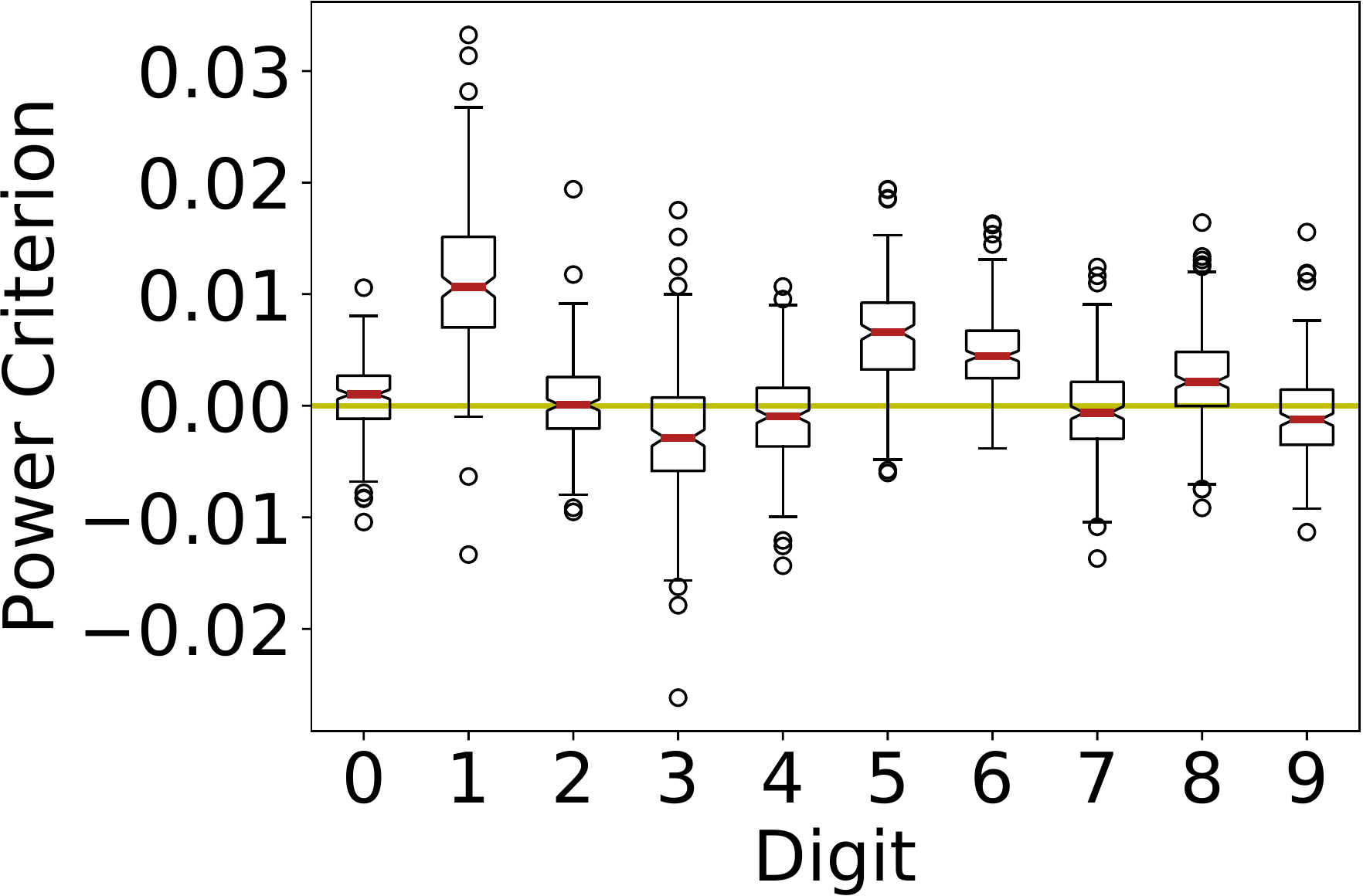}}\caption{Comparing GAN (the original formulation) and LSGAN, trained for 40
epochs on MNIST. \label{fig:gan40_vs_lsgan40}\vspace{-3mm}}
\end{figure}

\paragraph{WGAN-30 vs WGAN-30}

As a sanity check, we also run the same procedure on a case where
$P=Q$. We set $P=Q=$ Wasserstein GAN (WGAN, \citep{wgan} trained
for 30 epochs on MNIST. The results are shown in Figure \ref{fig:wgan30_vs_wgan30}.
As expected, the power criterion values spread around zero in all
cases. We note that we did not modify the procedure to treat this
special case. In particular, in each trial, two samples are drawn
from $P$ and $Q$ as usual.

\begin{figure}[h]
\centering{}\vspace{-3mm}\subfloat[Sample from $P=Q=$ WGAN trained for 30 epochs.]{\includegraphics[width=0.64\textwidth]{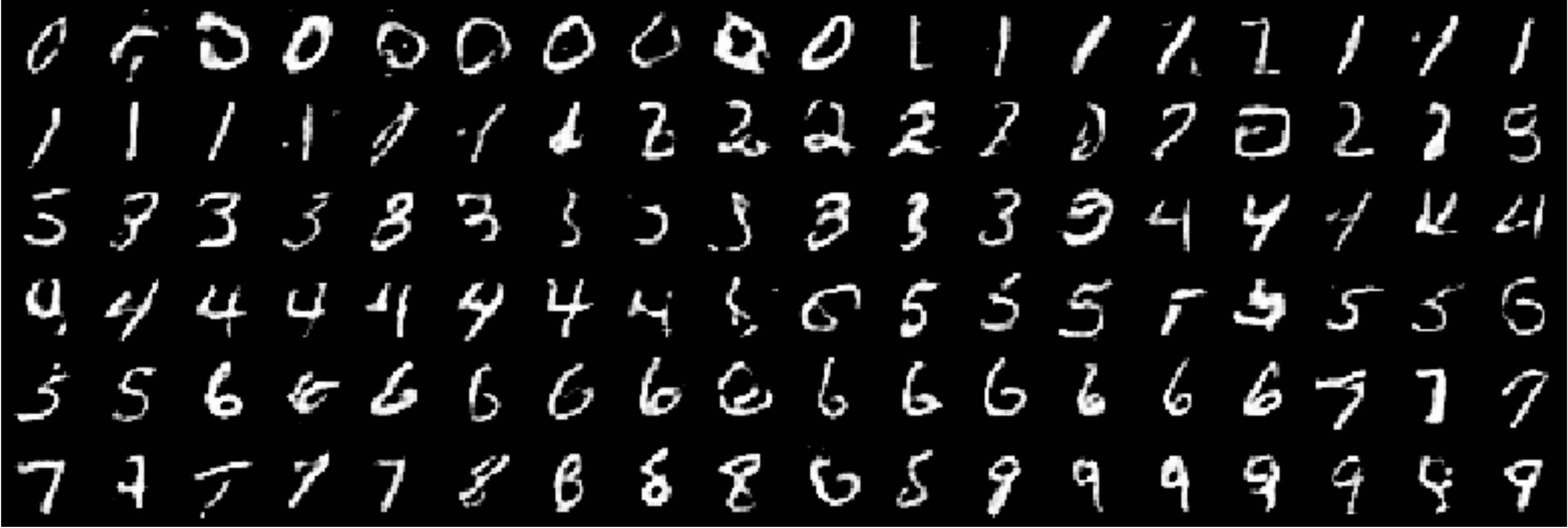}}\,\subfloat[Power criterion \label{fig:wgan30_vs_wgan30_box}]{\includegraphics[width=0.33\textwidth]{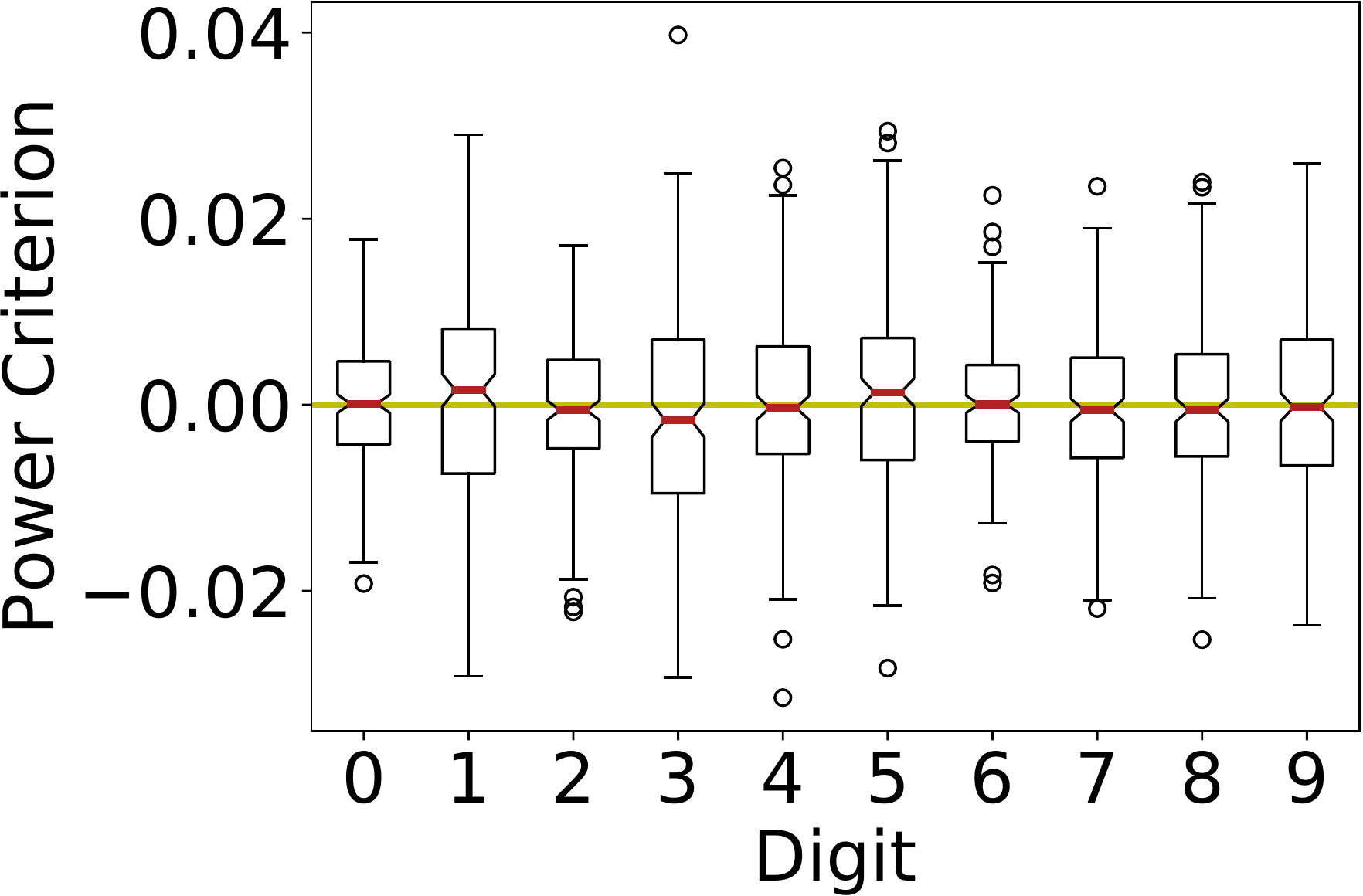}}\caption{Comparing two models which are the same for sanity checking. The model
is set to WGAN trained for 30 epochs. \label{fig:wgan30_vs_wgan30}\vspace{-3mm}}
\end{figure}

\end{document}